\def\grad{\nabla}
\def\br{\mathbf{r}}
\def\bx{\mathbf{x}}  
\def\by{\mathbf{y}}
\def\bo{\mathbf{0}}
\def\th{{\boldsymbol{\theta}}}
\def\Th{{\boldsymbol{\Theta}}}
\def\cB{\mathcal{B}}
\def\cD{\mathcal{D}}
\def\cE{\mathcal{E}}
\def\cF{\mathcal{F}}
\def\cI{\mathcal{I}}
\def\cN{\mathcal{N}}
\def\cO{\mathcal{O}}
\def\cQ{\mathcal{Q}}
\def\cV{\mathcal{V}}
\def\cX{\mathcal{X}}
\def\mR{\mathbb{R}}
\def\mS{\mathbb{S}}
\def\smskip{\smallskip}
\def\texitem#1{\par\smskip\noindent\hangindent 25pt
               \hbox to 25pt {\hss #1 ~}\ignorespaces}
\def\norm#1{\|#1\|}
\newcommand{\BEAS}{\begin{eqnarray*}}
\newcommand{\EEAS}{\end{eqnarray*}}
\newcommand{\BEA}{\begin{eqnarray}}
\newcommand{\EEA}{\end{eqnarray}}
\newcommand{\BEQ}{\begin{eqnarray}}
\newcommand{\EEQ}{\end{eqnarray}}
\newcommand{\BIT}{\begin{itemize}}
\newcommand{\EIT}{\end{itemize}}
\newcommand{\BNUM}{\begin{enumerate}}
\newcommand{\ENUM}{\end{enumerate}}
\newcommand{\BA}{\begin{array}}
\newcommand{\EA}{\end{array}}
\newcommand{\ones}{\mathbf 1}
\newcommand{\reals}{\mathbb{R}}
\newcommand{\integers}{\mathbb{Z}}
\newcommand{\Rank}{\mathop{\bf rank}}
\newcommand{\Tr}{\mathop{\bf Tr}}
\newcommand{\intr}{\mathop{\bf int}}
\newcommand{\vect}{\mathop{\bf vec}}
\newif\ifpagenumbering
\newsavebox{\theorembox}
\newsavebox{\lemmabox}
\newsavebox{\assbox}
\savebox{\theorembox}{\noindent\bf Theorem}
\savebox{\lemmabox}{\noindent\bf Lemma}
\savebox{\assbox}{\noindent\bf Assumption}
\newcommand{\mb}[1]{\mathbf{#1}}
\def\fprod#1{\left\langle#1\right\rangle}
\def\B#1#2{\cB_{\norm{.}_#1}(#2)}
\def\Bc#1#2{\bar{\cB}_{\norm{.}_#1}(#2)}
\newcommand\str{\bgroup\markoverwith
{\textcolor{red}{\rule[0.5ex]{2pt}{1.5pt}}}\ULon} 
\newcommand\soutg{\bgroup\markoverwith
{\textcolor{green}{\rule[0.5ex]{2pt}{1.5pt}}}\ULon} 
\newcommand\stb{\bgroup\markoverwith
{\textcolor{blue}{\rule[0.5ex]{2pt}{1.5pt}}}\ULon} 
\DeclareMathOperator*{\argmin}{argmin}
\DeclareMathOperator*{\cov}{cov}
\begin{document}


\renewcommand{\baselinestretch}{2}


\markboth{\hfill{\footnotesize\rm Davanloo Tajbakhsh, S., Aybat, N. S., and del Castillo, E.} \hfill}
{\hfill {\footnotesize\rm Generalized SPS} \hfill}

$\ $\par


\fontsize{12}{14pt plus.8pt minus .6pt}\selectfont \vspace{0.8pc}
\centerline{\large\bf Generalized Sparse Precision Matrix Selection for Fitting Multivariate\footnote{This research is based on the dissertation of the first author, conducted under the guidance of the second and the third authors, Drs. Aybat and del Castillo.}}
\vspace{2pt} \centerline{\large\bf Gaussian Random Fields to Large Data Sets}
\vspace{.4cm} \centerline{S. Davanloo Tajbakhsh$^1$, N.S. Aybat$^2$, and E. del Castillo$^2$} \vspace{.4cm} \centerline{\it
$^{1}$The Ohio State University and $^2$The Pennsylvania State University} \vspace{.55cm} \fontsize{9}{11.5pt plus.8pt minus
.6pt}\selectfont

\vspace{-0.5cm}
\begin{quotation}
\noindent {\it Abstract:}
{We present a new method for estimating multivariate, second-order stationary Gaussian Random Field (GRF) models based on the Sparse Precision matrix Selection (SPS) algorithm, proposed by \cite{SPSarxiv} for estimating scalar GRF models.} Theoretical convergence rates for the estimated {between-response} covariance matrix and for the estimated parameters of the {underlying spatial} correlation function are established. {Numerical tests using simulated and real datasets} validate our theoretical findings. Data segmentation is used to handle large data sets.  \par

\vspace{9pt}
\noindent {\it Key words and phrases:}
Multivariate Gaussian Processes, Gaussian Markov Random Fields, Spatial Statistics, Covariance Selection, Convex Optimization.
\par
\end{quotation}\par

\def\thefigure{\arabic{figure}}
\def\thetable{\arabic{table}}

\fontsize{12}{14pt plus.8pt minus .6pt}\selectfont

\section{Introduction}
\noindent Gaussian Random Field (GRF) models are very popular in Machine Learning, e.g.,~\citep{Rasmussen2006}, and are widely used in Geostatistics, e.g. \citep{Cressie2011}. They also have  applications in meteorology to model satellite data for forecasting or to solve inverse problems to tune weather models~\citep{Cressie2011}, or to model outputs of expensive-to-evaluate deterministic Finite Element Method (FEM) {computer codes}, e.g.,~\citep{DACE}. More recently, there have been applications of GRF to model stochastic simulations, e.g., queuing or inventory control models  \citep{Ankenman2010, Kleijnen2010}, or to model {free-form surfaces of manufactured}  products from noisy measurements for inspection or quality control purposes \citep{delCastillo2015}.

 In a GRF model, a key role is played by the covariance {or kernel} function which determines how the covariance between the process values at two locations   changes as the locations change across the process domain. There are many valid \emph{parametric} covariance functions, e.g., Exponential, Squared Exponential, {or Matern}; and Maximum Likelihood (ML) is the dominant method to estimate their parameters {from data (\cite{DACE})}. However, the ML fitting procedure suffers from two main challenges: \textbf{i}) the negative loglikelihood is a \emph{nonconvex} function of the covariance matrix; therefore, the covariance parameters may be poorly estimated, \textbf{ii}) the problem is computationally hard when the number of spatial locations $n$ is big. This is known as the \emph{``big-n"} problem in the literature. {Along with some other approximation methods, there is an important class that approximates the Gaussian likelihood using different forms of conditional independence assumptions which reduces the computational complexity significantly, e.g.,~\citep{Snelson05,Pourhabib14} and references therein.}

 {In ~\citep{SPSarxiv} we proposed the Sparse Precision Selection~(SPS) algorithm for \emph{univariate} processes to deal with the first challenge by providing theoretical guarantees on the SPS parameter estimates, and presented a \emph{segmentation} scheme on the training data to be able to solve big-$n$ problems. Given the nature of SPS, the segmentation does not result in discontinuities in the predicted process. In contrast, localized regression methods also rely on segmentation to reduce the computational cost; but, these methods may suffer from discontinuities on the predicted surface at the boundaries of the segments}. {In this paper, we present a Generalized SPS~(GSPS) method for fitting a {\em multivariate} GRF process that deals with the two aforementioned challenges when there are possibly cross-correlated multiple responses that occur at each spatial location.}


{Compared to SPS {(and also to GSPS)}, the likelihood approximation type GRF methods, e.g., \citep{Snelson05}, have the advantage of computational efficiency; {but, there are no guarantees on the quality of {the} parameter {estimates} as only an approximation to {the} likelihood function is optimized (compared to MLE, this is a small dimensional problem; but, still non-convex). On the other hand, {SPS has} theoretical error bound guarantees on hyper-parameter {estimates (this is also the case for GSPS, see Theorem~4 below)} -- note that these bounds also imply error guarantees on prediction quality through the mean of {the} predictive distribution.}}
\lhead[\footnotesize\thepage\fancyplain{}\leftmark]{}\rhead[]{\fancyplain{}\rightmark\footnotesize\thepage}

There is {a} wide variety of {applications that} require the approximation of a vector of \emph{correlated} responses obtained at each spatial or spatial-temporal location. {Climate models are classic Geostatistical examples where environmental variables such as atmospheric CO$_2$ concentration, ocean heat uptake and global surface temperature are jointly modeled (a simple such model is studied in \cite{UrbanKeller2010}).  Another classical application is environmental monitoring, for instance, \cite{Lin2008} uses a Multivariate GRF model to map spatial variations of five different heavy metals in soil. {This is} an application sharing a similar aim with 
Kriging in mining engineering where the spatial occurrence of two metals may be cross-correlated, e.g., silver and lead. Multivariate GRFs are also popular in multi-task learning \citep{Bonilla2007}, an area of machine learning  where multiple related tasks need to be learned so that simultaneously learning them can be better than learning them in isolation 
{without any} transfer of information between the tasks. The joint modeling of spatial responses is also useful in metrology when conducting {\em multi-fidelity analysis} \citep{Forrester2008}, where an expensive, high fidelity spatial response needs to be predicted from predominantly low fidelity responses, which are inexpensive -- see also \citep{Boyle2004}. Likewise, multivariate GRFs have been used to reconstruct 3-dimensional free-form surfaces of manufactured products {through} modeling each of the 3 coordinates of a measured point as a parametric surface response~\citep{delCastillo2015}. {Other applications of multivariate GRF include: \citep{wang2015} to model the response surface of a catalytic oxidation process with two highly correlated response variables; \citep{castellanos2015} to estimate low dimensional spatio-temporal patterns of finger motion in repeated reach-to-grasp movements; 
\citep{bhat2010} to study a multi-output GRF for computer model calibration with multivariate spatial data 
to infer parameters in a climate model.}
{Note that in many of such applications multiple realizations of the GRF {are} sensed/measured over time ($N>1$) over a fixed set of locations. {GRF applications with $N>1$ commonly arise in practice}, including those \textbf{i}) in ``metamodeling" of stochastic simulations for modeling an expensive-to-evaluate queuing or inventory control model, \textbf{ii}) in modeling product surfaces for inspection or quality control purposes, and \textbf{iii}) in models for which we observe a spatial process over time at the same locations 
{for a system known} to be static with respect to time.
}

{Rather} than considering each response independently, using the \emph{between-response} covariance can significantly enhance the prediction performance. {As mentioned by \cite{CressieBook}, the principle of exploiting co-variation to improve mean-squared prediction error goes back to Kolmogorov and Wiener in the first half of the XX century. It is well-known that the minimum-mean-square-error predictor of a single response component of a multivariate GRF involves the between-response covariances of all responses {\citep{DACE}}, a result {that lies} at the basis of the so-called Co-Kriging technique in Geostatistics~\citep{CressieBook}.}

{In this paper, we adopted a separable cross-covariance structure -- see~\eqref{eq:covY} -- which has been already adopted in the literature: \cite{Mardia93} proposed separability to model multivariate spatio-temporal data, and \cite{bhat2010} used separable cross-covariance for computer model calibration. This structure is also well known in the literature, see \citep{Gelfand2004,BanerjeeBook,gelfand2010} and \citep{genton2015}; moreover, \cite{Li08} even proposed a technique to test {the} {separability assumption} for a multivariate random process.}
{{Furthermore,} \cite{gelfand2010} mention one {additional use} of {a} separable covariance structure:
{\it ``A bivariate spatial process model using separability becomes appropriate for regression with a single covariate $X(s)$ and a univariate response $Y(s)$. In fact, we treat this as a bivariate process to allow for missing $X(s)$ for some observed $Y(s)$ and for inverse problems, inferring about $X(s_0)$ for a given $Y(s_0)$"}. {As an example of this type of application,} Banarjee and Gelfand have employed such {separable} models in \citep{banerjee2002,BanerjeeBook} to analyze {the} relationship between shrub density and dew duration for a dataset consisting of 1129 locations in a west-facing watershed in the Negev desert in Israel.}

However, fitting \emph{multivariate} GRFs not only suffers from the two challenges mentioned above; {in particular,} the parametrization of the matrix-valued covariance functions requires a higher-dimensional parameter vector which aggravates the difficulty of the GRF estimation problem further~{\citep{BanerjeeBook,Cressie2011}}. The goal of this paper is to extend the theory of the {univariate} SPS method~\citep{SPSarxiv} to include the {hyper-parameter} estimation of \emph{multivariate} GRF models {for which {the error bounds on the approximation quality} can be established}. The paper is organized as follows: Section 1.1 introduces the notation, and Section~2 provides some preliminary concepts related to the SPS method. In Section~3, {GSPS,} the multivariate generalization of the SPS method {is described and compared with other methods for fitting multivariate GRF}, and 
theoretical guarantees {of {the} GSPS estimates} are discussed. Section~4 includes numerical 
results. Finally, we summarize the main results in the paper and provide some future research directions in Section~5.

\subsection{Notation} Throughout the paper, given $x\in\reals^n$, $\norm{x}$, $\norm{x}_1$, $\norm{x}_\infty$ denote the Euclidean, $\ell_1$, and $\ell_\infty$ norms, respectively. For $x\in\reals^n$, $\diag(x)\in\mS^n$ denotes a diagonal matrix with its diagonal equal to $x$. Given $X\in\reals^{m\times n}$, we denote the vectorization of $X$ using $\vect(X)\in\reals^{np}$, obtained by stacking the columns of the matrix $X$ on top of one another. Moreover, let $r=\Rank(X)$, and $\sigma=[\sigma_i]_{i=1}^{r}\subset\reals^{r}_{++}$ (positive orthant) denote the singular values of $X$; then, $\norm{X}_F:=\norm{\sigma}$, $\norm{X}_2:=\norm{\sigma}_\infty$, and $\norm{X}_*:=\norm{\sigma}_1$ denote the Frobenius, spectral, and nuclear norms of $X$, respectively. Given $X,Y\in\reals^{m\times n}$, $\fprod{X,Y}:=\Tr(X^\top Y)$ denotes the standard inner product. Let $\cV$ be a normed vector space with norm $\norm{.}_a$. For $\bar{x}\in\cV$ and $r>0$, $\cB_{\norm{.}_a}(\bar{x},r):=\{x\in\cV:\ \norm{x-\bar{x}}_a<r\}$ denotes the open ball centered at $\bar{x}$ with radius $r>0$, and $\bar{\cB}_{\norm{.}_a}(\bar{x},r)$ denotes its closure.


\section{Preliminaries: the SPS method for a scalar GRF}
\noindent Let $\mathcal{X} \subseteq \mathbb{R}^d$ and $y:\cX\rightarrow\reals$ be a GRF, 
where $y(\mb{x})$ denotes the value of the process at location $\mb{x}\in\cX$. Let 
$m(\mb{x})=\mathbb{E}(y(\mb{x}))$ for $\mb{x}\in\cX$, and $c(\mb{x},\mb{x}')$ be {the spatial} covariance function denoting the covariance between $y(\mb{x})$ and $y(\mb{x}')$, i.e., $c(\mb{x},\mb{x}')=\mbox{cov}\left(y(\mb{x}),y(\mb{x}')\right)$ for all $\mb{x}, \mb{x}' \in\cX$. Without loss of generality, we assume that the GRF has a constant mean equal to zero, i.e., $m(\mathbf{x})=0$. Suppose the training data $\mathcal{D} = \{(\mb{x}_i,y^{(r)}_{i}): i=1,...,n,\ r=1,...,N\}$ contains $N$ realizations of the GRF at each of $n$ distinct locations in $\mathcal{D}^x:=\{\mb{x}_i\}_{i=1}^n\subset\cX$. Let $\mb{y}^{(r)}=[y_i^{(r)}]_{i=1}^n\in\reals^n$ denote the vector of $r$-th realization values for locations in $\cD^x$.

For simplicity in estimation, the covariance function, $c(\mb{x},\mb{x}')$, is typically assumed to belong to some parametric family $\{c(\mb{x},\mb{x}';\th,\nu):\ \th\in\Theta,\ \nu\geq 0\}$ and $c(\mathbf{x},\mathbf{x}',\th):=\nu \rho(\mb{x},\mb{x}',\boldsymbol{\theta})$, where $\rho(\mb{x},\mb{x}',\boldsymbol{\theta})$ is a parametric correlation function where $\th$  and $\nu$ denote the \emph{spatial correlation} and \emph{variance} parameters, respectively, and $\Theta\subset\reals^q$ is a set that contains the \emph{true} spatial correlation parameters -- see e.g. \cite{CressieBook}. Let $\th^*$ and $\nu^*$ denote the unknown \emph{true} parameters of the process. Given a set of locations $\cD^x=\{\mb{x}_i\}_{i=1}^n$, let $C(\th,\nu)\in\mS^n_{++}$ be such that its $(i,j)^{th}$ element is $c(\mb{x}_i,\mb{x}_j;\th,\nu)$ -- throughout, $\mS^n_{++}$ and $\mS^n_{+}$ denote the set of $n$-by-$n$ symmetric, positive definite and positive semidefinite matrices, respectively.

Let $C^*=C(\th^*,\nu^*)$ denote the true covariance matrix corresponding to locations in $\cD^x=\{\mb{x}_i\}_{i=1}^n$, and $P^*=(C^*)^{-1}$ denote the true \emph{precision matrix}.
In \cite{SPSarxiv}, we proposed a two-stage method, SPS, to estimate the unknown process parameters $\th^*$ and $\nu^*$.
\begin{figure}[t]
  \centering
  \vspace*{-0.3cm}
  \includegraphics[width=0.7\columnwidth]{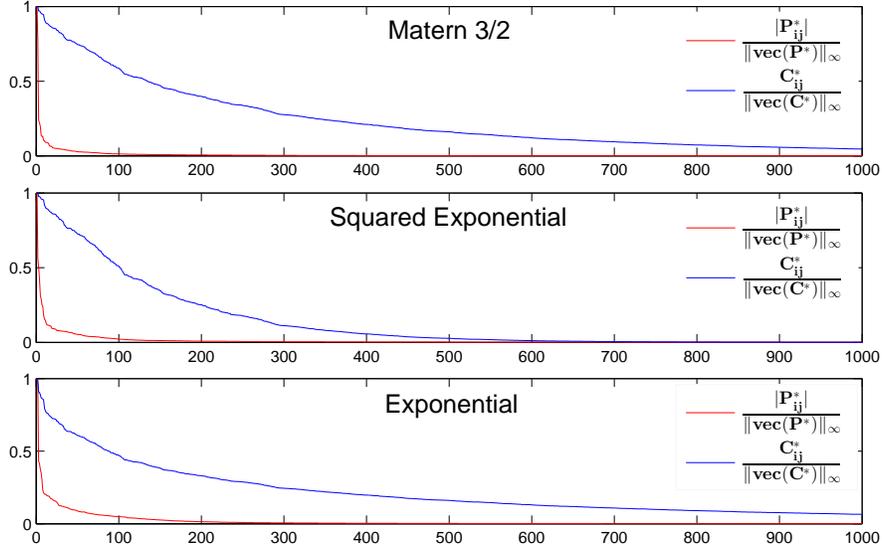}
  \vspace*{-0.1cm}
   \caption{{\scriptsize 
   Decaying behavior of elements of the Precision and Covariance matrices for GRFs. 
   The largest 1000 
   off-diagonal elements of the precision and covariance matrices (scaled by their maximums) plotted in descending order. 
   The underlying GRF was evaluated over 100 randomly selected points in $\cX=\{\mb{x}\in\reals^2: -50 \leq \mb{x}\leq 50\}$ for three covariance functions with range and variance parameters equal to 10, and 1, respectively.
   }} \label{fig:covSparsity}
   \vspace*{-0.25cm}
\end{figure}
{The method is 
motivated by the 
results in numerical linear algebra which demonstrate that if the elements of a matrix {show} a decay property, then {the elements of} its inverse also {show} a similar behavior -- see~\cite{benzi16,jaffard1990proprietes}. In particular, consider the two decay {classes} %
defined in~\cite{jaffard1990proprietes}: \vspace*{-0.1cm}
\begin{definition}
Given $\{\mb{x}_i\}_{i=1}^n\subset\cX$ and a metric $d:\cX\times\cX\rightarrow\reals_+$, a matrix $A\in\reals^{n\times n}$ belongs to the class $\cE_\gamma$ for some $\gamma>0$ if for all $\gamma'<\gamma$ there exists a constant $K_{\gamma'}$ such that 
$|A_{ij}|\leq K_{\gamma'}\ \exp\big(-\gamma' d(\mb{x}_i,\mb{x}_j)\big)$ for all $1\leq i,j\leq n$. 
Moreover, $A$ belongs to the class $\cQ_\gamma$ for some $\gamma>1$ if there exists a constant $K$ such that \vspace*{-5mm}
$|A_{ij}|\leq K\ \big(1+d(\mb{x}_i,\mb{x}_j)\big)^{-\gamma}$ for all $1\leq i,j\leq n$. \vspace*{-0.1cm}
\end{definition}
\begin{theorem}
\label{thm:decay}
Given $\{\mb{x}_i\}_{i=1}^n\subset\cX$ and a metric $d:\cX\times\cX\rightarrow\reals_+$, let $A\in\reals^{n\times n}$ be an invertible matrix. If $A\in\cE_\gamma$ for some $\gamma>0$, then $A^{-1}\in\cE_{\gamma'}$ for some $\gamma'>0$. Moreover, if $A\in\cQ_\gamma$ for some $\gamma>0$, then $A^{-1}\in\cQ_{\gamma}$. \vspace*{-0.1cm}
\end{theorem}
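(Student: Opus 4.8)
The plan is to recast the statement as an \emph{inverse-closedness} (or ``Wiener-type'') property of a Banach algebra of matrices whose off-diagonal entries obey the prescribed decay, and then to treat the two classes by different tools. First I would fix a rate $\gamma'<\gamma$ and equip the matrices satisfying $|A_{ij}|\le K\,e^{-\gamma' d(\mb{x}_i,\mb{x}_j)}$ (respectively $|A_{ij}|\le K(1+d(\mb{x}_i,\mb{x}_j))^{-\gamma}$) with the norm given by the smallest admissible constant $K$. The first task is to verify that these are Banach algebras under matrix multiplication, i.e.\ that the norm is submultiplicative up to a constant. For the exponential class this follows from the triangle inequality $d(\mb{x}_i,\mb{x}_k)\le d(\mb{x}_i,\mb{x}_j)+d(\mb{x}_j,\mb{x}_k)$, which makes the kernel a supermultiplicative weight, together with summability of $\sum_j e^{-(\gamma'-\gamma'')d(\mb{x}_i,\mb{x}_j)}$; the polynomial class uses the analogous convolution estimate $\sum_j (1+d(\mb{x}_i,\mb{x}_j))^{-\gamma}(1+d(\mb{x}_j,\mb{x}_k))^{-\gamma}\lesssim (1+d(\mb{x}_i,\mb{x}_k))^{-\gamma}$. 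For finite $n$ these sums are automatically finite, so the real content is that the constants be \emph{independent of $n$}; this requires a mild geometric (bounded-density) assumption on $\{\mb{x}_i\}$ and, for the polynomial kernel, $\gamma$ above the growth dimension of the point set, which is what the Definition's restriction $\gamma>1$ reflects.

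For the exponential class I would then give a direct argument of Combes--Thomas type rather than appeal to abstract algebra. Fix a reference index $j_0$, let $f(\mb{x}):=d(\mb{x},\mb{x}_{j_0})$, and introduce the conjugated matrix $A_t:=U_t A U_t^{-1}$ with $U_t:=\diag\big(e^{t f(\mb{x}_i)}\big)$, so that $(A_t)_{ij}=A_{ij}\,e^{t(f(\mb{x}_i)-f(\mb{x}_j))}$. Because $f$ is $1$-Lipschitz and $A\in\cE_\gamma$, a Schur-test bound shows $\norm{A_t-A}_2\to 0$ as $t\to 0$, so for $|t|\le t_0$ (any fixed $t_0<\gamma$) the matrix $A_t$ stays invertible with $\norm{A_t^{-1}}_2$ uniformly bounded, by a Neumann-series perturbation of $A^{-1}$. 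Since $A_t^{-1}=U_t A^{-1} U_t^{-1}$, reading off the $(i,j_0)$ entry gives $|(A^{-1})_{i j_0}|=|(A_t^{-1})_{i j_0}|\,e^{-t\,d(\mb{x}_i,\mb{x}_{j_0})}\le \norm{A_t^{-1}}_2\,e^{-t\,d(\mb{x}_i,\mb{x}_{j_0})}$, which is exactly membership of $A^{-1}$ in $\cE_{\gamma'}$ with $\gamma'=t_0$. The degradation $\gamma'<\gamma$ matches the statement.

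For the polynomial class the tilting trick is unavailable, and here the argument reduces to the genuinely harder inverse-closedness of $\cQ_\gamma$ inside the bounded operators on $\ell^2$: if $A\in\cQ_\gamma$ is invertible as an operator, then $A^{-1}\in\cQ_\gamma$ with the exponent \emph{preserved}. One route is to invoke the symmetric-Banach-algebra machinery (Hulanicki-type spectral invariance, as in Jaffard's original argument); a more self-contained route is a bootstrapping estimate that starts from the a priori decay of $A^{-1}$ forced by $\norm{A^{-1}}_2<\infty$ and the algebra product bound, and then iterates a resolvent identity of the form $A^{-1}=D^{-1}+A^{-1}(D-A)D^{-1}$ to upgrade the decay order step by step until it saturates at $\gamma$.

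I expect this polynomial inverse-closedness to be the main obstacle. Unlike the exponential case there is no convergent Neumann series in a single diagonal splitting, and the constant in the convolution estimate must be controlled carefully so that the bootstrapped decay constants do not blow up across iterations; the requirement that $\gamma$ exceed the growth dimension of $\{\mb{x}_i\}$ is precisely what keeps the weight summable with $n$-independent constants and the algebra inverse-closed, and is the reason the exponent is conserved for $\cQ_\gamma$ while it is merely retained up to an arbitrarily small loss for $\cE_\gamma$.
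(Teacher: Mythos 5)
The paper does not actually prove this theorem: its ``proof'' is a one-line citation to Propositions~2 and~3 of Jaffard (1990). So any genuine argument is already doing more than the text, and your proposal should be judged as an independent reconstruction. Two things you get right that the paper leaves implicit: (i) for a fixed finite matrix the statement is vacuous (any $A^{-1}$ lies in every class by inflating $K$), so the real content is $n$-uniformity of the constants under a bounded-density/growth assumption on $\{\mb{x}_i\}$ --- this is exactly the setting of Jaffard's propositions; (ii) your exponential-class argument is a correct, self-contained route. The Combes--Thomas conjugation $A_t=U_tAU_t^{-1}$, a Schur-test bound $\norm{A_t-A}_2\leq C|t|$, Neumann-series stability of invertibility, and reading off the $(i,j_0)$ entry do give $A^{-1}\in\cE_{\gamma'}$ for some $\gamma'>0$. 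One slip: invertibility of $A_t$ is \emph{not} available for ``any fixed $t_0<\gamma$''; you need $C|t|<1/\norm{A^{-1}}_2$, so the surviving exponent $\gamma'$ depends on $\norm{A^{-1}}_2$ (and is in general far from $\gamma$, not an ``arbitrarily small loss''). Since the theorem only asserts existence of some $\gamma'>0$, this does not damage the conclusion.

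The polynomial half, however, contains a genuine gap. Your route~(2) --- bootstrapping from ``the a priori decay of $A^{-1}$ forced by $\norm{A^{-1}}_2<\infty$'' via $A^{-1}=D^{-1}+A^{-1}(D-A)D^{-1}$ --- never gets off the ground. Operator boundedness forces \emph{no} entrywise decay whatsoever: for $M\in\cQ_\gamma$ one only gets $|(A^{-1}M)_{ik}|\leq\norm{A^{-1}}_2\,\norm{Me_k}_2$, which is uniformly bounded but decays in neither index, so the first iterate contributes nothing to feed back in. Iterating the identity simply reproduces the Neumann expansion $A^{-1}=\sum_{k}D^{-1}\big((D-A)D^{-1}\big)^k$ plus a remainder of the form $A^{-1}\big((D-A)D^{-1}\big)^m$; its convergence requires a diagonal-dominance condition (spectral radius of $(D-A)D^{-1}$ below one) that invertibility of $A$ does not supply, and even then the $\cQ_\gamma$-constants of the powers grow geometrically, so the series need not converge in the decay class. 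This obstruction is precisely what makes Jaffard's Proposition~3 nontrivial: his proof (and the later Banach-algebra treatments of Sun and Gr\"ochenig) overcomes it by reducing to $B=I-cA^{*}A$ with $\norm{B}_2<1$ and interpolating, entry by entry, between the geometrically \emph{decaying} operator norms $\norm{B^k}_2$ and the geometrically \emph{growing} algebra constants of $B^k$; that balancing act is what preserves the exponent $\gamma$. As written, your polynomial case therefore collapses to your route~(1), i.e., to invoking the very result the paper cites --- legitimate, but not an independent proof, and your closing paragraph correctly anticipates that this is where the real difficulty lies.
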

\begin{proof}
See Proposition~2 and Proposition~3 in~\cite{jaffard1990proprietes}. \vspace*{-0.4cm}
\end{proof}}
{This fast decay structure in the precision (inverse covariance) matrix of a GRF makes it a \emph{compressible signal}~\citep{candes2006compressive}; hence, one can argue that it can be well-approximated by a sparse matrix -- 
compare it with the covariance matrix {depicted} in Figure~\ref{fig:covSparsity}.}
{For} all 
stationary GRFs tested, we 
observed that for a finite set of {locations, the} magnitudes of the off-diagonal elements of the {\em precision} matrix decay 
to 0 
 much \emph{faster} than 
 {the elements of} the covariance matrix.

Let $a^*$ and $b^*$ be given constants such that $0\leq a^*\leq\sigma_{\min}(P^*)\leq\sigma_{\max}(P^*)\leq b^*\leq \infty$. In the first stage of the SPS algorithm, we proposed to solve the following convex loglikelihood problem penalized with a weighted $\ell_1$-norm  to estimate the true precision 
matrix corresponding to the given data locations $\cD^x$: \vspace*{-5mm}
\begin{equation}\label{eq:convexProgram}
\hat{P} := \argmin\{\fprod{S,P}-\log\det(P)+\alpha\fprod{G,|P|}: a^*\mb{I}\preceq P \preceq b^*\mb{I}\},
\vspace*{-3mm}
\end{equation}
where $S =\frac{1}{N}\sum_{r=1}^N \mb{y}^{(r)}{\mb{y}^{(r)}}^\top\in\mS^n_+$ is the sample covariance matrix. 
The weight matrix $G\in\mS^n$ is chosen as the matrix of pairwise distances: \vspace*{-5mm}
\begin{equation}
G_{ij} = \norm{\mb{x}_i-\mb{x}_j}, \quad \mbox{if $i\neq j$}, \quad
G_{ii} = \min\{\norm{\mb{x}_i-\mb{x}_j}:\ j\in\cI\setminus\{i\}\}, \label{eq:distMat}
\vspace*{-3mm}
\end{equation}
for all $(i,j)\in\cI\times \cI$, where $\cI=\{1,2,...,n\}$ and $|.|$ is the elementwise absolute value operator. The sparsity structure of the \emph{estimated} precision matrix $\hat{P}$ encodes the conditional independence structure of a Gaussian Markov Random {Field} (GMRF) approximation to the GRF. Using ADMM, the Alternating Direction Method of Multipliers, see~\citep{Boyd2011}, \eqref{eq:convexProgram} can be solved efficiently. {Indeed, since $-\log\det(.)$ is strongly convex and has a Lipschitz continuous gradient for $0< a^*\leq b^*<\infty$, ADMM iterate sequence converges \emph{linearly} to the optimal solution with a \emph{linear} rate~\citep{DengYin15}.}

In the second stage of the SPS method, we proposed to solve a least-square problem \eqref{eq:nonconvexFrobStationary} to estimate the unknown parameters $\boldsymbol{\theta}^*$ and $\nu^*$: \vspace*{-5mm}
\begin{equation} \label{eq:nonconvexFrobStationary}
(\hat{\th},\hat{\nu}) = \argmin_{\th\in\Theta,~\nu\geq 0} \norm{C(\th,\nu)-{\hat{P}}^{-1}}_F^2.
\vspace*{-3mm}
\end{equation}
In \cite{SPSarxiv}, we showed how to solve each optimization problem, and also established theoretical convergence rate of the SPS estimator.

SPS is therefore based on a Gaussian Markov Random Field (GMRF) approximation to the GRF. {While a GMRF on a \emph{lattice} can represent exactly a GRF under the conditional independence assumption,
this representation of a GRF can only be an approximation in a general continuous location space.}
{The} index set is countable for the lattice data, {but} the index set $\mathcal{X}$ for a GRF is \emph{uncountable}; hence, in general GMRF models cannot represent GRFs {\em exactly}.  \cite{LindgrenRueLindstrom2011} recently established
that the Matern GRFs are Markovian; in particular, they are Markovian when the smoothing parameter $\nu$ is such that $\nu-d/2\in\integers_+$, where $d$ is the dimension of the input space --
see \cite{LindgrenRueLindstrom2011} and
\cite{Fulgstad2015} for using this idea in the approximation of anisotropic and non-stationary GRFs.
Rather than using a triangulation of the input space as proposed by \cite{LindgrenRueLindstrom2011}, or assuming a lattice process, the \emph{first stage} of SPS lets the data determine the near-conditional independence pattern between variables through the precision matrix estimated via a weighted $\ell_1$-regularization.
Furthermore, this first stage helps to ``zoom into" the area where the true covariance parameters are located; hence, it helps not to get trapped in local optimum solutions in the {\emph{second stage}} of the method.

\section{Multivariate GRF Models}
\noindent From now on, let $y(\mathbf{x})\in \mR^p$ be the response vector at $\mathbf{x}\in\cX\subseteq\mR^d$ of a \emph{multivariate} Gaussian Random Field (GRF) $y:\cX\rightarrow\mR^p$ with \emph{zero} mean and a \emph{cross-covariance} function $c(\mathbf{x},\mathbf{x}')=\cov\left(y(\mathbf{x}),y(\mathbf{x}')\right)\in\mS^p_{++}$. The cross-covariance function is a crucial object in multivariate GRF models which should converge to a symmetric and positive-definite matrix as $\norm{\mathbf{x}-\mathbf{x}'}\rightarrow0$. {Similar} to the univariate case, the process is \emph{second-order stationarity} if $c(.,.)$ depends on $\mathbf{x}$ and $\mathbf{x}'$ only through $\mathbf{x}-\mathbf{x}'$, and it is \emph{isotropic} if $c(.,.)$ depends on $\mathbf{x}$ and $\mathbf{x}'$ only through $\norm{\mathbf{x}-\mathbf{x}'}$.

The parametric structure of the cross-covariance matrix should be such that the resulting cross-covariance matrix is a positive-definite matrix. \cite{Gelfand2004} and \cite{BanerjeeBook} review some methods to construct a valid cross-covariance function. 
In these methods, parameter estimation involves solving nonconvex optimization problems.

In this study, we assume a \emph{separable} cross-covariance function belonging to a parametric family, and propose a two-stage procedure for estimating the unknown parameters. The separable model assumes that the cross-covariance function is a multiplication of a spatial correlation function and a positive-definite between-response covariance matrix {(see~\citet{gelfand2010,Gelfand2004} and the references therein)}: \vspace*{-5mm}
\begin{equation} \label{eq:sepCovFunc}
c(\mathbf{x},\mathbf{x}')=\rho(\mathbf{x},\mathbf{x}')~\Gamma^*\in \mS^p_{+},
\vspace*{-5mm}
\end{equation}
where $\rho:\cX\times\cX\rightarrow[0,1]$ is the spatial correlation function, and $\Gamma^*\in\mS^p_{++}$ is the between-response covariance matrix. Furthermore, let $\by=[y(\mathbf{x}_1)^\top,...,y(\mathbf{x}_n)^\top]^\top\in\mR^{np}$ denote the process values in long vector form corresponding to locations in $\mathcal{D}^x:=\{\mb{x}_i\}_{i=1}^n\subset\cX$. Given the cross-covariance function \eqref{eq:sepCovFunc}, and the set of locations $\mathcal{D}^x$, $\by$ follows a multivariate Gaussian distribution with zero mean and covariance matrix equal to \vspace*{-5mm}
\begin{equation} \label{eq:covY}
C^*=R^*\otimes\Gamma^*,
\vspace*{-3mm}
\end{equation}
where $R^*\in\mS^n_{++}$ is the spatial correlation matrix such that $R^*_{ij}=\rho(\mathbf{x}_i,\mathbf{x}_j)$ for $i,j\in\cI:=\{1,\ldots,n\}$, and $\otimes$ denotes the Kronecker product. Hence, \vspace*{-5mm} 
\begin{equation} \label{eq:modelY}
\by\sim\cN(\bo,C^*).
\vspace*{-5mm}
\end{equation}

Let $\cD = \{(\mathbf{x}_i, y_i^{(r)}):\ i\in\cI,\ r=1,...,N\}$ be the training data set that contains $N$ realizations of the process over $n$ distinct locations $\mathcal{D}^x\subset\cX$, i.e., for each $r\in\{1,\ldots,N\}$, $\by^{(r)}=[y_i^{(r)}]_{i\in\cI}\in\reals^{np}$ is an independent realization of $\by=[y(\mb{x}_i)]_{i\in\cI}$. Hence, $\{\by^{(r)}\}_{r=1}^N$ are i.i.d. according to \eqref{eq:modelY}.

As in the univariate case, suppose the correlation function belongs to a parametric family $\{\rho(\mb{x},\mb{x}';\boldsymbol{\theta}):\ \boldsymbol{\theta}\in\Theta\}$, where $\Theta$ is a \emph{closed} \emph{convex} set containing the true parameter vector, $\th^*$, of the correlation function $\rho$. Given $\cD^x=\{\bx_i\}_{i\in\cI}$, define $R^*:=R(\th^*)$, where $R(\th)\in\mS^{n}_{++}$ is such that \vspace*{-5mm}
\begin{equation}
\label{eq:cor_matrix}
R(\th)=[r_{ij}(\th)]_{i,j\in\cI},\quad r_{ij}(\th)=\rho(\mb{x}_i,\mb{x}_j;\th)\quad \forall i,j\in\cI.
\vspace*{-3mm}
\end{equation}

{Consider a GRF model with all its parameters known, the {\emph{best linear unbiased} prediction at a new location $\bx_0$ is given by} the mean of the conditional distribution \newline $p(\by(\bx_0)|\{\by^{(r)}\}_{r=1}^N,\cD^x)$ which is \vspace*{-5mm}
\begin{equation}\label{eq:mvPred}
{\hat{\by}(\bx_0)=\large(\br(\bx_0;\th^*)^\top\otimes\Gamma^*\large)\large(R(\th^*)\otimes\Gamma^*\large)^{-1}\sum_{r=1}^N\by^{(r)}/N,} \vspace*{-5mm}
\end{equation}
where $\br(\bx_0;\th^*)\in\mR^n$ contains the spatial correlation between the new point $\bx_0$ and $n$ observed data points -- see~\citep{DACE}. It is important to note that the prediction equation is a \emph{continuous} function of the parameters $\th^*$ and $\Gamma^*$; hence, \emph{biased} estimation of the parameters will translate to poor prediction performance. Finally, the prediction formula~\eqref{eq:mvPred} shows the importance of considering {the} between-response covariance matrix {$\Gamma^*$} rather than using {$p$} independent univariate GRFs for prediction. Indeed, predicting each response independently of the others will result in suboptimal predictions.
}

The sample covariance matrix $S\in\mS^{np}_+$ is calculated as $S = \frac{1}{N}\sum_{r=1}^N\by^{(r)}{\by^{(r)}}^\top$. Furthermore, let $G\in\mS^n$ {be} such that $G_{ij}>0$ for all $i,j\in\cI$; in particular, we fix $G$ as in \eqref{eq:distMat} based on inter-distances. Let $P^*=(C^*)^{-1}$ be the true precision matrix corresponding to locations in $\cD^x$, and let $a^*$ and $b^*$ be some given constants such that $0\leq a^*\leq\sigma_{\min}(P^*)\leq\sigma_{\max}(P^*)\leq b^*\leq \infty$. To estimate 
$P^*$, we propose to solve the following convex program: \vspace*{-5mm}
\begin{equation}\label{eq:multVarCovexProgram}
\hat{P} = \argmin_{a^* I \preceq P\preceq b^*I} \fprod{S,P}-\log\det(P)+\alpha\fprod{G\otimes(\ones_p\ones_p^\top),|P|},
\vspace*{-3mm}
\end{equation}
where $|.|$ is the element-wise absolute value operator, and $\ones_p\in\reals^p$ denotes the vector of all ones. This objective penalizes the elements of the precision matrix with weights proportional to the distance between their locations. Problem \eqref{eq:multVarCovexProgram} can be solved efficiently using the ADMM implementation proposed in \cite{SPSarxiv}. Indeed, for $0< a^*\leq b^*<\infty$, the function $-\log\det(.)$ is strongly convex and has a Lipschitz continuous gradient; therefore, the ADMM sequence converges \emph{linearly} to the optimal solution -- see~\cite{DengYin15}.

Let $\hat{C}:=\hat{P}^{-1}$, and for all $(i,j)\in\cI\times\cI$ define block matrices $S^{ij}\in\mS^p$, $\hat{C}^{ij}\in\mS^p$ and $\Sigma^{ij}\in\mS^p$ such that $S=[S^{ij}]$, $\hat{C}=[\hat{C}^{ij}]$ and $C^*=[\Sigma^{ij}]$, i.e., $S^{ij}\in\mS^p$, $\hat{C}^{ij}\in\mS^p$ and $\Sigma^{ij}\in\mS^p$ are the sample, estimated and true covariance matrices between the locations $\mb{x}_i$ and $\mb{x}_j$. The following establishes a probability bound for the estimation error $\hat{P}-P^*$. 
\begin{theorem}\label{thm:statAnalysis1}
Let $\{\by^{(r)}\}_{r=1}^N\subset\mR^{nq}$ be independent realizations of a GRF with zero-mean and stationary covariance function $c(\mb{x},\mb{x}';\boldsymbol{\theta}^*)$ observed over $n$ distinct locations $\{\mb{x}_i\}_{i\in\cI}$ with $\cI:=\{1,...,n\}$; furthermore, let $C^*=R(\th^*)\otimes \Gamma^*$ be the true covariance matrix, and $P^*:={C^*}^{-1}$ be the corresponding true precision matrix, where $R(\th)$ is defined in \eqref{eq:cor_matrix}. Finally, let $\hat{P}$ be the {GSPS} estimator computed as in \eqref{eq:multVarCovexProgram} for some $G\in\mS^n$ such that $G_{ij}\geq 0$ for all $(i,j)\in\cI\times\cI$. 
Then for any given $M>0$, $N\geq N_0:=\left\lceil2\big[(M+2)\ln(np)+\ln 4\big]\right\rceil$, and $b^*\geq\sigma_{\max}(P^*)$, \vspace*{-5mm}
\begin{equation}\label{eq:probBoundGaussProcess}
\mbox{Pr}\left(\norm{\hat{P}-P^*}_F\leq 2 {b^*}^2p(n+\norm{G}_F)\alpha\right)\geq 1-(np)^{-M}, \vspace*{-3mm}
\end{equation}
for all $\alpha$ such that $40\max\limits_{i=1,...,p}(\Gamma_{ii}^*)\sqrt{\frac{N_0}{N}}\leq\alpha\leq 40\max\limits_{i=1,...,p}(\Gamma_{ii}^*)$. \vspace*{-5mm}
\end{theorem}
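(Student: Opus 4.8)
The plan is to split the argument into a purely deterministic optimization bound and a probabilistic concentration bound, and then combine them on a single high‑probability event. First I would show, using only the structure of the convex program \eqref{eq:multVarCovexProgram}, that
$\norm{\hat P-P^*}_F\le (b^*)^2\big(\norm{S-C^*}_F+\alpha\,\norm{W}_F\big)$, where $W:=G\otimes(\ones_p\ones_p^\top)$ is the penalty‑weight matrix with $\norm{W}_F=p\norm{G}_F$. Then I would establish that, with probability at least $1-(np)^{-M}$, the sample covariance concentrates as $\norm{S-C^*}_F\le np\,\alpha$, which together with the deterministic inequality yields $\norm{\hat P-P^*}_F\le (b^*)^2 p\alpha(n+\norm{G}_F)$, and hence the stated bound (with a factor of $2$ of slack).

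For the deterministic step I would set $g(P):=\fprod{S,P}-\log\det(P)$ and $\Delta:=\hat P-P^*$, and note that $-\log\det$ is strongly convex with modulus $\mu=1/(b^*)^2$ on the feasible box, since $\sigma_{\max}(P)\le b^*$ there; this is exactly why only $b^*$ (and not $a^*$, which merely guarantees feasibility) enters the rate. Strong monotonicity of $\nabla g(P)=S-P^{-1}$ gives $\fprod{\hat C-C^*,\Delta}\le-\mu\norm{\Delta}_F^2$ with $\hat C=\hat P^{-1}$. Because $P^*$ is feasible ($a^*\le\sigma_{\min}(P^*)$ and $\sigma_{\max}(P^*)\le b^*$), the first‑order optimality of $\hat P$ over the box yields a subgradient $Z\in\partial\fprod{W,|\hat P|}$, with $|Z_{kl}|\le W_{kl}$ entrywise, satisfying $\fprod{S-\hat C+\alpha Z,\,P^*-\hat P}\ge 0$. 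Combining these two facts gives $\mu\norm{\Delta}_F^2\le\fprod{C^*-S,\Delta}+\alpha\fprod{W,|\Delta|}$; bounding the first term by $\norm{C^*-S}_F\norm{\Delta}_F$ (Cauchy--Schwarz) and the second by $\alpha\norm{W}_F\norm{\Delta}_F$ (entrywise Cauchy--Schwarz, using $|Z_{kl}|\le W_{kl}$) and dividing by $\norm{\Delta}_F$ produces the target inequality.

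The probabilistic step is the crux. I would first use $\norm{S-C^*}_F\le np\max_{k,l}|S_{kl}-C^*_{kl}|$ and then control the entrywise error by Gaussian concentration: since each $\by^{(r)}\sim\cN(\bo,C^*)$, the products $y^{(r)}_k y^{(r)}_l$ are sub‑exponential, so a Bernstein‑type bound gives $\mathrm{Pr}\big(|S_{kl}-C^*_{kl}|>t\big)\le 4\exp\!\big(-Nt^2/(3200\,\bar c^2)\big)$ for $t\in(0,40\bar c]$, with $\bar c:=\max_k C^*_{kk}$. The key structural observation is that, since $C^*=R(\th^*)\otimes\Gamma^*$ and $R(\th^*)_{ii}=\rho(\bx_i,\bx_i)=1$, the diagonal of $C^*$ consists precisely of the diagonal entries of $\Gamma^*$, so $\bar c=\max_i\Gamma^*_{ii}$. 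A union bound over the at most $(np)^2$ entries with $t=\alpha$ (the factor $(np)^2$ being the source of the ``$M+2$''), combined with $N\ge N_0$, $\alpha\ge 40\max_i\Gamma^*_{ii}\sqrt{N_0/N}$, and $N_0\ge 2[(M+2)\ln(np)+\ln 4]$, drives the failure probability below $(np)^{-M}$; the upper restriction $\alpha\le 40\max_i\Gamma^*_{ii}$ is exactly what keeps $t=\alpha$ inside the validity window of the sub‑exponential tail. On this event $\norm{S-C^*}_F\le np\,\alpha$, which fed into the deterministic bound completes the proof.

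The main obstacle is pinning down the sub‑exponential concentration with constants sharp enough to reproduce the exact threshold $40\max_i\Gamma^*_{ii}\sqrt{N_0/N}$ and the precise form of $N_0$, including verifying the validity range that is encoded by the otherwise mysterious upper bound $\alpha\le 40\max_i\Gamma^*_{ii}$. By contrast, the optimization half is routine convex analysis once the strong‑convexity modulus $1/(b^*)^2$ and the normal‑cone form of the optimality condition over the box constraint are in hand.
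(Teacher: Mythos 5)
Your proposal is correct, and its overall architecture matches the paper's proof: both split the argument into a deterministic bound driven by the strong-convexity modulus $1/{b^*}^2$ of $-\log\det$ on $\{P:\ a^*\mathbf{I}\preceq P\preceq b^*\mathbf{I}\}$, and the probabilistic event that every entry of $S-C^*$ is at most $\alpha$ in magnitude, which both you and the paper obtain from Lemma~1 of \cite{Ravikumar11high} after the same structural observation ($\Sigma^{ii}=\Gamma^*$ because $R^*_{ii}=1$, so the variance scale is $\max_i\Gamma^*_{ii}$ and the restriction $\alpha\leq 40\max_i\Gamma^*_{ii}$ is exactly the lemma's validity window), followed by a union bound producing the $(np)^{-M}$ tail and the $(M+2)$ inside $N_0$; your union bound over $(np)^2$ entries is equivalent to the paper's bound over $n^2$ blocks of $p^2$ entries each, and your exponent $Nt^2/(3200\,\bar c^{\,2})$ is identical to the paper's $\frac{N}{2}\big(\frac{\alpha}{40\max_i\Gamma^*_{ii}}\big)^2$. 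The one genuine difference is the deterministic step. The paper argues by objective comparison: with $\Delta:=P-P^*$ and $H(\Delta):=F(\Delta)-F(\mathbf{0})$, it shows that under the event $H(\Delta)\geq \frac{1}{2{b^*}^2}\norm{\Delta}_F^2-\alpha p(n+\norm{G}_F)\norm{\Delta}_F>0$ whenever $\norm{\Delta}_F>2{b^*}^2p(n+\norm{G}_F)\alpha$, while the minimizer satisfies $H(\hat{\Delta})\leq H(\mathbf{0})=0$, so $\hat\Delta$ cannot lie in that region. You instead use the first-order variational inequality at $\hat{P}$ against the feasible point $P^*$, paired with strong monotonicity of $P\mapsto -P^{-1}$. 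Both are routine convex analysis, but your form avoids the factor $\tfrac{1}{2}$ inherent in the function-value version of strong convexity, so you arrive at ${b^*}^2p(n+\norm{G}_F)\alpha$ and the stated bound \eqref{eq:probBoundGaussProcess} holds with a factor-of-$2$ slack, exactly as you note. The only concern you flag --- reproducing the sub-exponential constants sharply enough --- is not a real obstacle, since the tail bound in precisely the form you state is the cited lemma, which can be invoked directly as the paper does.
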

\begin{proof}
See the appendix.
\end{proof}

Given that $C^*=R^*\otimes\Gamma^*$, and the diagonal elements of the spatial correlation matrix $R^*$ are equal to one, we have $\Sigma^{ii}=\Gamma^*$. Therefore, we propose to estimate the between-response covariance matrix $\Gamma^*$ by taking the average of the $p\times p$ matrices along the diagonal of $\hat{C}$, i.e.,  \vspace*{-6mm}
\begin{equation}
\label{eq:gamma_hat}
\hat{\Gamma}:=\frac{1}{n}\sum_{i=1}^n\hat{C}^{ii}\in\mS^n_{++}.
\vspace*{-4mm}
\end{equation}
Note that \eqref{eq:multVarCovexProgram} implies that $\hat{P}\in\mS^{np}_{++}$; hence, $\hat{C}\in\mS^{np}_{++}$ as well. Therefore, all its block-diagonal elements are positive definite, i.e., $\hat{\Sigma}^{ii}\in\mS^n_{++}$ for $i=1,...,n$. Since $\hat{\Gamma}$ is a convex combination of $\hat{\Sigma}^{ii}\in\mS^n_+,\ i=1,...,n$ and the cone of positive definite matrices is a convex set, we also have $\hat{\Gamma}\in\mS^n_{++}$. A probability bound in the estimation error of the covariance matrices is shown in the following theorem. \vspace*{-2mm}
\begin{theorem}\label{thm:gammaConvergence}
Given $M>0$, $N\geq N_0:=\left\lceil2\big[(M+2)\ln(np)+\ln 4\big]\right\rceil$, and $a^*$, $b^*$ such that $0<a^*\leq \sigma_{\min}(P^*)\leq\sigma_{\max}(P^*)\leq b^*<\infty$, let $\hat{P}$ be the SPS estimator 
as in \eqref{eq:multVarCovexProgram}. Then $\hat{\Gamma}$, defined in \eqref{eq:gamma_hat}, and $\hat{C}=\hat{P}^{-1}$ satisfy
\vspace*{-5mm}
\begin{equation*}
\mbox{Pr}\left(\max\{\norm{\hat{C}-C^*}_2,~\norm{\hat{\Gamma}-\Gamma^*}_2\}\leq 2 \left(\frac{b^*}{a^*}\right)^2p(n+\norm{G}_F)\alpha\right)\geq 1-(np)^{-M}, \vspace*{-3mm}
\end{equation*}
for all $\alpha$ such that $40\max\limits_{i=1,...,p}(\Gamma_{ii}^*)\sqrt{\frac{N_0}{N}}\leq\alpha\leq 40\max\limits_{i=1,...,p}(\Gamma_{ii}^*)$.
\end{theorem}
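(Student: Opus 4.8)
The plan is to treat Theorem~\ref{thm:statAnalysis1} as a black box that already controls $\norm{\hat{P}-P^*}_F$ on a high-probability event, and then propagate that bound through two purely deterministic, linear-algebraic steps: first to $\norm{\hat{C}-C^*}_2$, and then to $\norm{\hat\Gamma-\Gamma^*}_2$. Accordingly, I would condition on the event $\mathcal{A}:=\{\norm{\hat{P}-P^*}_F\leq 2{b^*}^2 p(n+\norm{G}_F)\alpha\}$, which by Theorem~\ref{thm:statAnalysis1} has probability at least $1-(np)^{-M}$ under the stated hypotheses on $N$, $b^*$, and $\alpha$. Since everything after this is deterministic, the same event $\mathcal{A}$ carries the final conclusion, and the probability statement is inherited directly.

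First I would translate the precision-matrix error into a covariance error via the resolvent identity $\hat{C}-C^*=\hat{P}^{-1}-(P^*)^{-1}=-\hat{C}(\hat{P}-P^*)C^*$. Submultiplicativity of the spectral norm gives $\norm{\hat{C}-C^*}_2\leq\norm{\hat{C}}_2\,\norm{\hat{P}-P^*}_2\,\norm{C^*}_2$, and the two outer factors are pinned down by the eigenvalue constraints: the feasibility constraint $a^* I\preceq\hat{P}$ in \eqref{eq:multVarCovexProgram} forces $\sigma_{\min}(\hat{P})\geq a^*$, so $\norm{\hat{C}}_2=1/\sigma_{\min}(\hat{P})\leq 1/a^*$, while $a^*\leq\sigma_{\min}(P^*)$ gives $\norm{C^*}_2=1/\sigma_{\min}(P^*)\leq 1/a^*$. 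Combining these with the elementary inequality $\norm{\hat{P}-P^*}_2\leq\norm{\hat{P}-P^*}_F$ and the definition of $\mathcal{A}$ yields $\norm{\hat{C}-C^*}_2\leq\frac{1}{{a^*}^2}\norm{\hat{P}-P^*}_F\leq 2(b^*/a^*)^2 p(n+\norm{G}_F)\alpha$, which is exactly the asserted bound for $\hat{C}$.

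Next I would pass from $\hat{C}-C^*$ to $\hat\Gamma-\Gamma^*$. Because the diagonal blocks of $R^*$ equal one, $\Sigma^{ii}=\Gamma^*$ for every $i$, so $\hat\Gamma-\Gamma^*=\frac{1}{n}\sum_{i=1}^n(\hat{C}^{ii}-\Sigma^{ii})$ is the block-average of the diagonal blocks of the error matrix $E:=\hat{C}-C^*$. The key fact is that each principal $p\times p$ block satisfies $\norm{E^{ii}}_2\leq\norm{E}_2$: writing $E^{ii}=J_i^\top E J_i$ with $J_i$ the coordinate-selection matrix whose columns are orthonormal, submultiplicativity gives $\norm{E^{ii}}_2\leq\norm{J_i}_2^2\,\norm{E}_2=\norm{E}_2$. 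A triangle inequality then produces $\norm{\hat\Gamma-\Gamma^*}_2\leq\frac{1}{n}\sum_{i=1}^n\norm{\hat{C}^{ii}-\Sigma^{ii}}_2\leq\norm{\hat{C}-C^*}_2$, so the bound established for $\hat{C}$ transfers verbatim to $\hat\Gamma$; taking the maximum of the two quantities finishes the argument on $\mathcal{A}$.

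I do not expect a genuine obstacle here, since the proof is a short chain of operator-norm inequalities anchored to the probabilistic bound of Theorem~\ref{thm:statAnalysis1}. The one point that warrants care is the principal-submatrix inequality $\norm{E^{ii}}_2\leq\norm{E}_2$ for the symmetric error $E$; the cleanest justification is to use $\norm{E^{ii}}_2=\max_{\norm{v}=1}|v^\top E^{ii}v|$ and observe that each quadratic form equals $\tilde{v}^\top E\tilde{v}$ for the zero-padded lift $\tilde{v}\in\reals^{np}$ with $\norm{\tilde{v}}=\norm{v}$, whence it is bounded by $\norm{E}_2$. A secondary check is that all the eigenvalue comparisons require $a^*>0$, which is precisely the hypothesis $0<a^*\leq\sigma_{\min}(P^*)$ assumed in this theorem (strictly stronger than the $a^*\geq 0$ permitted in Theorem~\ref{thm:statAnalysis1}), so that $1/a^*$ is well defined and finite.
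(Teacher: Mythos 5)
Your proof is correct and follows essentially the same route as the paper: condition on the event of Theorem~\ref{thm:statAnalysis1}, pass from $\norm{\hat{P}-P^*}_F$ to $\norm{\hat{C}-C^*}_2$ with the factor $1/{a^*}^2$, and then bound $\norm{\hat\Gamma-\Gamma^*}_2$ by averaging the diagonal blocks. The only difference is that you supply details the paper leaves implicit --- the resolvent identity $\hat{C}-C^*=-\hat{C}(\hat{P}-P^*)C^*$ in place of the cited Lipschitz continuity of $P\mapsto P^{-1}$, and the explicit principal-submatrix inequality $\norm{E^{ii}}_2\leq\norm{E}_2$ --- which strengthens rather than changes the argument.
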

\begin{proof}
From \eqref{eq:probBoundGaussProcess}, we have \vspace*{-5mm}
\begin{equation*}
\norm{\hat{C}-C^*}_2\leq \frac{1}{{a^*}^2}\norm{\hat{P}-P^*}_2 \leq \frac{1}{{a^*}^2}\norm{\hat{P}-P^*}_F\leq 2 \left(\frac{b^*}{a^*}\right)^2p(n+\norm{G}_F)\alpha,
\vspace*{-3mm}
\end{equation*}
where the first inequality follows from the Lipschitz continuity of $P\mapsto P^{-1}$ on the domain $P\succeq a^*\mathbf{I}$ with respect to the spectral norm $\norm{.}_2$. Hence, 
given that $\Gamma^*=\Sigma^{ii}$ for all $i\in\cI$, we have 
$\norm{\hat{C}^{ii}-\Gamma^*}_2\leq 2 \left(\frac{b^*}{a^*}\right)^2p(n+\norm{G}_F)\alpha$ 
for all $i\in\cI$.
Therefore, from convexity of $X\mapsto\norm{X-\Gamma^*}_2$, it follows that \vspace*{-5mm}
\begin{equation*}
\norm{\hat{\Gamma}-\Gamma^*}_2\leq \sum_{i\in\cI}\frac{1}{n}\norm{\hat{C}^{ii}-\Gamma^*}_2\leq 2 \left(\frac{b^*}{a^*}\right)^2p(n+\norm{G}_F)\alpha.
\vspace*{-1.2cm}
\end{equation*}
\end{proof}

\vspace*{-0.5cm}
\noindent \noindent {\bf Remark.}
For Theorems 2 and 3 to hold, $\alpha$ should belong to the interval $40\max\limits_{i=1,...,p}(\Gamma_{ii}^*)\sqrt{\frac{N_0}{N}}\leq\alpha\leq 40\max\limits_{i=1,...,p}(\Gamma_{ii}^*)$; for $N\geq N_0$ this interval is non-empty. The trade-off here is such that smaller $\alpha$ makes the estimation error bounds inside the probabilities tighter -- hence, desirable; however, at the same time, smaller $\alpha$ makes the estimated precision matrix less sparse which would require more memory to store a denser estimated precision matrix. Although the upper-bound on $\alpha$ is fixed, one can play with the lower bound; in particular, one can make it smaller by requiring more realizations $N$.

Given $\cD^x=\{\mb{x}_i\}_{i\in\cI}\subset\cX$, define $R:\reals^q\rightarrow\mS^n$ over $\Theta\subset\reals^q$
as in \eqref{eq:cor_matrix}, i.e., $R(\th)=[r_{ij}(\th)]_{i,j\in\cI}\in\mS^n$ and $r_{ij}(\th)=\rho(\mb{x}_i,\mb{x}_j;\th)$ for all $(i,j)\in\cI\times\cI$. To estimate the true parameter vector of the spatial correlation function, $\th^*$, we propose to solve
\vspace*{-8mm}
\begin{equation}
\label{eq:corrParamEst}
\hat{\th}\in\argmin_{\th\in\Theta} \frac{1}{2}\sum_{i,j\in\cI}\norm{r_{ij}(\th)\hat{\Gamma}-\hat{C}^{ij}}_F^2.
\vspace*{-5mm}
\end{equation}
The objective function of \eqref{eq:corrParamEst} can be written in a more compact form as the parametric function below, 
with parameters $\Gamma\in\mS^p$ and $C\in\mS^{np}$:
\vspace*{-5mm}
\begin{equation}
\label{eq:obj_stage2}
f(\th;\Gamma,C):=\frac{1}{2}\norm{R(\th)\otimes\Gamma-C}_F^2.
\vspace*{-3mm}
\end{equation}
Let $\th=[\theta_1,\ldots,\theta_q]^\top$, and $R_k':\reals^q\rightarrow\mS^n$ such that $R_k'(\th)=[\frac{\partial}{\partial\theta_k}r_{ij}(\th)]_{i,j\in\cI}$ for $k=1,\ldots,q$. Similarly, $R_{k\ell}^{''}:\reals^q\rightarrow\mS^n$ such that $R_{k\ell}^{''}(\th)=[\frac{\partial^2}{\partial\theta_k\partial\theta_\ell}r_{ij}(\th)]_{i,j\in\cI}$ for $1\leq k ,\ell\leq q$.
Let $Z(\th;\Gamma,C):=R(\th)\otimes\Gamma-C$; hence, $f(\th;\Gamma,C)=\norm{Z(\th;\Gamma,C)}_F^2/2$; and define $Z_k'(\th;\Gamma):=R_k'(\th)\otimes\Gamma$ for $k=1,\ldots,q$.
\begin{lemma}
\label{lem:strong_convexity}
Suppose $\rho(\mb{x},\mb{x}';\th)$ is twice continuously differentiable in $\th$ over $\Theta$ for all $\mb{x},\mb{x}'\in\cX$, then there exists $\gamma^*>0$ such that 
$\grad^2_\th f(\th^*;\Gamma^*,C^*)\succeq \gamma^*\mathbf{I}$ if and only if 
$\{\vect(R'_k(\th^*))\}_{k=1}^q\subset\reals^{n^2}$ are linearly independent.
\end{lemma}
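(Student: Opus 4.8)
The plan is to exploit the special structure at the true parameters, namely that the residual $Z(\th^*;\Gamma^*,C^*)$ vanishes, which collapses the Hessian into a Gram matrix whose definiteness is governed exactly by linear independence. First I would compute $\grad^2_\th f$ explicitly. Writing $f(\th)=\tfrac{1}{2}\fprod{Z(\th),Z(\th)}$ with $Z(\th)=R(\th)\otimes\Gamma^*-C^*$, two differentiations give
\[
\frac{\partial^2 f}{\partial\theta_k\partial\theta_\ell}
=\fprod{Z_k'(\th),Z_\ell'(\th)}+\fprod{Z(\th),\,R_{k\ell}''(\th)\otimes\Gamma^*}.
\]
The key observation is that the true covariance satisfies $C^*=R(\th^*)\otimes\Gamma^*$, so $Z(\th^*;\Gamma^*,C^*)=0$; the second-derivative term therefore vanishes at $\th^*$, regardless of the $R_{k\ell}''$, and the Hessian reduces to
\[
\big[\grad^2_\th f(\th^*;\Gamma^*,C^*)\big]_{k\ell}
=\fprod{R_k'(\th^*)\otimes\Gamma^*,\ R_\ell'(\th^*)\otimes\Gamma^*}.
\]

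Next I would recognize this as the Gram matrix of the collection $\{R_k'(\th^*)\otimes\Gamma^*\}_{k=1}^q$ in the inner-product space $\mS^{np}$, and invoke the standard fact that a Gram matrix is always positive semidefinite and is strictly positive definite if and only if its generating vectors are linearly independent. Because $q$ is finite, positive definiteness is equivalent to the existence of $\gamma^*:=\sigma_{\min}(\grad^2_\th f(\th^*;\Gamma^*,C^*))>0$ with $\grad^2_\th f(\th^*;\Gamma^*,C^*)\succeq\gamma^*\mathbf{I}$, so the ``there exists $\gamma^*$'' phrasing is merely a restatement of strict definiteness. It then remains to translate linear independence of $\{R_k'(\th^*)\otimes\Gamma^*\}_{k=1}^q$ into linear independence of $\{\vect(R_k'(\th^*))\}_{k=1}^q$.

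For that final step I would use two elementary facts about Kronecker products and vectorization. Since $\sum_k c_k\big(R_k'(\th^*)\otimes\Gamma^*\big)=\big(\sum_k c_k R_k'(\th^*)\big)\otimes\Gamma^*$, and $A\otimes\Gamma^*=0$ forces $A=0$ whenever $\Gamma^*\neq 0$, the family $\{R_k'(\th^*)\otimes\Gamma^*\}$ is linearly independent if and only if $\{R_k'(\th^*)\}$ is; here $\Gamma^*\in\mS^p_{++}$ guarantees $\Gamma^*\neq 0$. Finally, $\vect(\cdot)$ is a linear isomorphism, so $\{R_k'(\th^*)\}$ are linearly independent as matrices precisely when $\{\vect(R_k'(\th^*))\}_{k=1}^q$ are linearly independent in $\reals^{n^2}$; chaining these equivalences closes the argument. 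I expect the only delicate point to be the Hessian computation together with the observation that $Z(\th^*)=0$ annihilates the $R_{k\ell}''$ term---this is what makes the characterization clean and is precisely why the result is anchored at the true triple $(\th^*,\Gamma^*,C^*)$; everything downstream is routine linear algebra.
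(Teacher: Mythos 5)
Your proposal is correct and follows essentially the same route as the paper's proof: both exploit $C^*=R(\th^*)\otimes\Gamma^*$ so that the residual term involving $R''_{k\ell}$ vanishes, collapsing the Hessian at $(\th^*,\Gamma^*,C^*)$ to a Gram matrix whose positive definiteness is equivalent to linear independence. The only cosmetic difference is that the paper simplifies $\fprod{R_k'(\th^*)\otimes\Gamma^*,R_\ell'(\th^*)\otimes\Gamma^*}=\norm{\Gamma^*}_F^2\fprod{R_k'(\th^*),R_\ell'(\th^*)}$ to write the Hessian directly as $\norm{\Gamma^*}_F^2 J(\th^*)^\top J(\th^*)$ with $J(\th^*)=[\vect(R_1'(\th^*))\,\cdots\,\vect(R_q'(\th^*))]$, whereas you keep the Kronecker-product Gram matrix and pass to $\{\vect(R_k'(\th^*))\}$ via the observation that $A\otimes\Gamma^*=0$ forces $A=0$; both are routine and equivalent.
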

\begin{proof}
Clearly,
$\grad_\th f(\th;\Gamma,C)=
\left[ \fprod{Z_1'(\th;\Gamma),Z(\th;\Gamma,C)}, \ldots, \fprod{Z_q'(\th;\Gamma),Z(\th;\Gamma,C)}
\right]^\top$.
\newline Hence, it can be shown that for $1\leq k\leq q$
\vspace*{-5mm}
\begin{equation}
\label{eq:grad_f_explicit}
\frac{\partial}{\partial \theta_k}f(\th;\Gamma,C) 
=\norm{\Gamma}_F^2\fprod{R_k'(\th),R(\th)}-\fprod{C,R_k'(\th)\otimes\Gamma},
\vspace*{-3mm}
\end{equation}
and from the product rule for derivatives, it follows that for $1\leq k,\ell\leq q$ \vspace*{-8mm}
{
\begin{equation}
\label{eq:hessian}
\frac{\partial^2}{\partial \theta_k \partial \theta_\ell}f(\th;\Gamma,C)
=\norm{\Gamma}_F^2\fprod{R_k'(\th),R_\ell'(\th)}+\fprod{R_{k\ell}^{''}(\th)\otimes\Gamma, R(\th)\otimes\Gamma-C}. \vspace*{-8mm}
\end{equation}
}%
Thus, since $C^*=r(\th^*)\otimes\Gamma^*$, we have \vspace*{-5mm}
\begin{equation*}
\frac{\partial^2}{\partial \theta_k \partial \theta_\ell}f(\th;\Gamma^*,C^*)
=\norm{\Gamma^*}_F^2\fprod{R_k'(\th^*),R_\ell'(\th^*)}.
\vspace*{-3mm}
\end{equation*}
Therefore, 
$\grad^2_\th f(\th^*;\Gamma^*,C^*)=\norm{\Gamma^*}_F^2~J(\th^*)^\top J(\th^*)$,
where $J(\th)\in\reals^{n^2\times q}$ such that $J(\th):=[\vect(R_1'(\th))\ldots \vect(R_q'(\th))]$. Hence,  there exists $\gamma^*>0$ such that $\grad^2_\th f(\th^*;\Gamma^*,C^*)\succeq\gamma^* I$ when $\{\vect(R_k'(\th^*))\}_{k=1}^q\subset\reals^{n^2}$ are linearly independent.
\end{proof}

\noindent {\bf Remark.} We 
comment on the linear independence condition stated in Lemma~\ref{lem:strong_convexity}. 
{For illustration purposes, } consider
the \emph{anisotropic exponential correlation} function $\rho(\bx,\bx',\th)=\exp\big(-(\bx-\bx')^\top\diag(\th)(\bx-\bx')\big)$, where $q=d$, and $\Theta=\reals^d_{+}$. Let $\cX=[-\beta,\beta]^d$ for some $\beta>0$, and suppose $\{\bx_i\}_{i\in\cI}$ is a set of independent identically distributed \emph{uniform} random samples inside $\cX$. Then it can be easily shown that for the anisotropic exponential correlation function, the condition in Lemma~\ref{lem:strong_convexity} holds with probability 1, i.e., $\{\vect(R_k'(\th^*))\}_{k=1}^d$ are linearly independent w.p.~1.


The next result builds on Lemma~\ref{lem:strong_convexity}, and it shows the convergence of the {GSPS} estimator as the number of samples per location, $N$, increases. 
\begin{theorem}
\label{thm:main}
Suppose $\th^*\in\intr\Th$, and $\rho(\mb{x},\mb{x}';\th)$ is twice continuously differentiable in $\th$ over $\Theta$ for all $\mb{x},\mb{x}'\in\cX$. Suppose $\{\vect(R'_k(\th^*))\}_{k=1}^q\subset\reals^{n^2}$ are linearly independent. For any given $M>0$ and $N\geq N_0:=\left\lceil2(M+2)\ln(np)+\ln 16\right\rceil$,
let $\hat{\th}^{(N)}$ be the {GSPS} estimator of $\th^*$, i.e., $\hat{\th}=\argmin_{\th\in\Theta} f(\th;\hat{\Gamma},\hat{C})$, and $\hat{\Gamma}$ be computed as in \eqref{eq:gamma_hat}.  
Then for any sufficiently small $\epsilon>0$, there exists $N\geq N_0$ satisfying $N=\cO(N_0/\epsilon^2)$
such that setting $\alpha=40\max_{i=1,\ldots,p}(\Gamma^*_{ii})\sqrt{\frac{N_0}{N}}$ in \eqref{eq:multVarCovexProgram} implies $\norm{\hat{\th}^{(N)}-\th^*}\leq\epsilon$ and $\norm{\hat{\Gamma}-\Gamma^*}=\cO(\epsilon)$ with probability at least $1-(np)^{-M}$; moreover, the STAGE-II function $f(\cdot;\hat{\Gamma},\hat{C})$ is strongly convex around the estimator
$\hat{\th}$. \vspace*{-3mm}
\end{theorem}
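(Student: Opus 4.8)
The plan is to combine the spectral error bound of Theorem~\ref{thm:gammaConvergence} with a perturbation analysis of the STAGE-II objective $f(\cdot;\Gamma,C)$ around the point $(\th^*,\Gamma^*,C^*)$, where $f$ attains its exact minimum value $0$. First I would condition on the high-probability event of Theorem~\ref{thm:gammaConvergence}: with the prescribed choice $\alpha=40\max_i(\Gamma^*_{ii})\sqrt{N_0/N}$, with probability at least $1-(np)^{-M}$ both $\norm{\hat C-C^*}_2$ and $\norm{\hat\Gamma-\Gamma^*}_2$ are bounded by $\delta(N):=2(b^*/a^*)^2 p(n+\norm{G}_F)\,\alpha=\cO(\sqrt{N_0/N})$. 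In particular this already yields $\norm{\hat\Gamma-\Gamma^*}=\cO(\delta(N))$, which becomes $\cO(\epsilon)$ once $N$ is chosen as below. The rest of the argument is deterministic on this event.

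Next I would establish the local strong convexity. From the Hessian formula \eqref{eq:hessian}, the map $(\th,\Gamma,C)\mapsto\grad^2_\th f(\th;\Gamma,C)$ is jointly continuous: $\rho$ is twice continuously differentiable, so $R,R'_k,R''_{k\ell}$ are continuous in $\th$, while the dependence on $(\Gamma,C)$ is polynomial. By Lemma~\ref{lem:strong_convexity}, the linear-independence hypothesis gives $\grad^2_\th f(\th^*;\Gamma^*,C^*)=\norm{\Gamma^*}_F^2\,J(\th^*)^\top J(\th^*)\succeq\gamma^*\mathbf{I}$ for some $\gamma^*>0$. Joint continuity then furnishes radii $r>0$ and $\delta_0>0$ such that $\grad^2_\th f(\th;\Gamma,C)\succeq\tfrac{\gamma^*}{2}\mathbf{I}$ whenever $\norm{\th-\th^*}\le r$, $\norm{\Gamma-\Gamma^*}_2\le\delta_0$, and $\norm{C-C^*}_2\le\delta_0$. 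Taking $N$ large enough that $\delta(N)\le\delta_0$ makes $f(\cdot;\hat\Gamma,\hat C)$ be $\tfrac{\gamma^*}{2}$-strongly convex on $\{\th\in\Theta:\norm{\th-\th^*}\le r\}$, which is the asserted strong convexity around $\hat\th$ once $\hat\th$ is shown to lie in this set.

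To turn this into an error bound on $\hat\th$, I would first control the gradient of the perturbed objective at $\th^*$. Since $C^*=R(\th^*)\otimes\Gamma^*$, formula \eqref{eq:grad_f_explicit} gives $\grad_\th f(\th^*;\Gamma^*,C^*)=0$; subtracting this from $\grad_\th f(\th^*;\hat\Gamma,\hat C)$ and using boundedness of $R(\th^*),R'_k(\th^*),\Gamma^*$ together with $\norm{\hat\Gamma-\Gamma^*}_2,\norm{\hat C-C^*}_2\le\delta(N)$ yields $\norm{\grad_\th f(\th^*;\hat\Gamma,\hat C)}\le L\,\delta(N)$ for a constant $L$ depending only on $(\th^*,\Gamma^*,C^*)$. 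Granting $\hat\th\in\{\th:\norm{\th-\th^*}\le r\}$, strong convexity together with the optimality $f(\hat\th;\hat\Gamma,\hat C)\le f(\th^*;\hat\Gamma,\hat C)$ gives the standard estimate $\tfrac{\gamma^*}{4}\norm{\hat\th-\th^*}^2\le\norm{\grad_\th f(\th^*;\hat\Gamma,\hat C)}\,\norm{\hat\th-\th^*}$, hence $\norm{\hat\th-\th^*}\le(4L/\gamma^*)\,\delta(N)=\cO(\sqrt{N_0/N})$. Choosing $N=\cO(N_0/\epsilon^2)$ then forces both $\norm{\hat\th-\th^*}\le\epsilon$ and $\norm{\hat\Gamma-\Gamma^*}=\cO(\epsilon)$.

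The main obstacle is the localization invoked twice above: showing that the global minimizer $\hat\th=\argmin_{\th\in\Theta}f(\th;\hat\Gamma,\hat C)$ actually falls inside the radius-$r$ ball about $\th^*$, since the strong-convexity estimate is only local whereas the $\argmin$ ranges over all of $\Theta$. I would close this gap with an $\argmin$-consistency argument: $\th^*$ is the unique global minimizer of the limiting objective $f(\cdot;\Gamma^*,C^*)$ over $\Theta$ (with value $0$), and $f(\cdot;\hat\Gamma,\hat C)\to f(\cdot;\Gamma^*,C^*)$ uniformly on $\Theta$ as $\delta(N)\to 0$ (using $\th^*\in\intr\Theta$ together with compactness of $\Theta$ or of the relevant sublevel sets); a standard M-estimation/epi-convergence result then traps $\hat\th$ in any prescribed neighborhood of $\th^*$ for all sufficiently large $N$. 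Making the uniqueness and uniform-convergence hypotheses precise, and quantifying ``sufficiently large'' so that it remains compatible with $N=\cO(N_0/\epsilon^2)$, is the delicate part of the argument.
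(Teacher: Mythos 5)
Your main chain of reasoning coincides with the paper's own proof: condition on the event of Theorem~\ref{thm:gammaConvergence}; combine joint continuity of $\grad^2_\th f$ (from \eqref{eq:hessian}) with Lemma~\ref{lem:strong_convexity} to get $\grad^2_\th f\succeq\tfrac{\gamma^*}{2}\mathbf{I}$ on a neighborhood of $(\th^*,\Gamma^*,C^*)$; bound $\norm{\grad_\th f(\th^*;\hat{\Gamma},\hat{C})-\grad_\th f(\th^*;\Gamma^*,C^*)}$ linearly in $\max\{\norm{\hat{\Gamma}-\Gamma^*}_2,\norm{\hat{C}-C^*}_2\}$; and convert this into $\norm{\hat{\th}-\th^*}=\cO(\sqrt{N_0/N})$ by strong convexity, choosing $N=\cO(N_0/\epsilon^2)$. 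The paper does exactly this, with explicit constants (its $\kappa$, built from nuclear/spectral norm inequalities) and with the first-order condition $\grad_\th f(\hat{\th};\hat{\Gamma},\hat{C})=\mathbf{0}$ in place of your inequality $f(\hat{\th};\hat{\Gamma},\hat{C})\le f(\th^*;\hat{\Gamma},\hat{C})$ --- an immaterial difference.

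The genuine gap is in your patch for the localization step, which you yourself flag as the delicate part. Your epi-convergence fix needs (i) $\th^*$ to be the \emph{unique global} minimizer of $f(\cdot;\Gamma^*,C^*)$ over all of $\Theta$, and (ii) compactness of $\Theta$ or of sublevel sets. Neither is available under the theorem's hypotheses: the linear independence of $\{\vect(R'_k(\th^*))\}_{k=1}^q$ is a purely \emph{local} condition --- it makes $\th^*$ a strict local minimizer but says nothing about other zeros of $f(\cdot;\Gamma^*,C^*)$, i.e., points $\th\neq\th^*$ with $R(\th)=R(\th^*)$ --- and $\Theta$ is only assumed closed and convex (e.g., $\Theta=\reals^d_{+}$), so even global uniqueness would not yield the well-separation condition $\inf\{f(\th;\Gamma^*,C^*):\norm{\th-\th^*}\geq r\}>0$ that an M-estimation consistency argument requires; moreover, since the entries of $R(\th)$ lie in $[0,1]$, the objective $f(\cdot;\Gamma,C)$ is bounded above and its sublevel sets are typically not compact. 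The paper resolves localization with a different device that uses only the local hypotheses: it truncates the feasible set to the compact convex set $\Theta'=\Theta\cap\Bc{2}{\th^*,\tfrac{1}{2}\delta_2}$ on which $f(\cdot;\Gamma,C)$ is strongly convex for all $(\Gamma,C)$ in a neighborhood $\cQ$ of $(\Gamma^*,C^*)$, defines the parametric minimizer $\th(\Gamma,C)=\argmin_{\th\in\Theta'}f(\th;\Gamma,C)$, and invokes Berge's Maximum Theorem to conclude that $\th(\cdot,\cdot)$ is continuous at $(\Gamma^*,C^*)$ with $\th(\Gamma^*,C^*)=\th^*$; once $(\hat{\Gamma},\hat{C})$ is close enough, the ball constraint is inactive and the restricted minimizer is identified with $\hat{\th}$ (this last identification is itself stated tersely in the paper, but the surrounding machinery is entirely local). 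If you wish to keep your route, you must add global identifiability of $\th\mapsto R(\th)$ and a coercivity or compactness assumption --- neither of which the theorem grants --- so as written your argument cannot be completed.
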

\begin{proof}
See the appendix. \vspace*{-3mm}
\end{proof}

\vspace{0.2cm}
\noindent \noindent {\bf Remark.} In Theorem 4, $\alpha$ is explicitly set equal to the lower bound, i.e., \newline $\alpha=40\max\limits_{i=1,...,p}(\Gamma_{ii}^*)\sqrt{\frac{N_0}{N}}=40\max\limits_{i=1,...,p}(\Gamma_{ii}^*)\sqrt{\frac{\left\lceil2\big[(M+2)\ln(np)+\ln 4\big]\right\rceil}{N}}$. Note that $M$ controls the probability bound; hence, the only unknown is $\max\limits_{i=1,...,p}(\Gamma_{ii}^*)$ -- we implicitly assume that this quantity can be estimated empirically or we have a prior knowledge about it. Moreover, Theorem~4 also guides us how to select $\alpha$. Indeed, both $\norm{\hat{\th}^{(N)}-\th^*}\leq \epsilon$ and $\norm{\hat{\Gamma}-\Gamma^*}=\cO(\epsilon)$ whenever $N=\cO(N_0/\epsilon^2)$; therefore, this implies we should set $\alpha=\cO(\epsilon)$. In the simulations provided in Section~4, $\alpha$ is set equal to $c\sqrt{\log(np)/N}$ where $c$ is chosen $10^{-2}$   {after} some preliminary cross-validation studies.

A summary of the proposed algorithm for fitting multivariate GRFs models is provided in Algorithm~\ref{alg:multGRF}.

\vspace*{-4mm}
{\singlespacing
\begin{algorithm}[h]
\caption{{GSPS} algorithm to fit multivariate GRFs}
\label{alg:multGRF}
\textbf{input}: $\mathcal{D}=\{(\mb{x}_i,\by_i^{(r)})\}_{i=1}^n\subset\cX\times\mR^p,\ i\in\cI,\ r=1,...,N\}$ \\
/* Compute the sample covariance and distance matrices*/\\
\hspace*{0.2in} $\by^{(r)}\gets[\by(\mathbf{x}_1)^T,...,\by(\mathbf{x}_n)^T]^T\in\mR^{np},\ r=1,...,N$\\
\hspace*{0.2in} $S\gets \frac{1}{N}\sum_{r=1}^N\by^{(r)}{\by^{(r)}}^\top$ \\
\hspace*{0.2in} $G_{ij}\gets \norm{\mb{x}_i-\mb{x}_j}_2, \quad \mbox{if}\ i\neq j$,\ $G_{ii}\gets \min\{\norm{\mb{x}_i-\mb{x}_j}_2:\ j\in\cI\setminus\{i\}$ \\
/* Compute the precision matrix and its inverse */ \\
\hspace*{0.2in} $\hat{P}\gets \argmin \{\fprod{S,P}-\log\det(P)+\alpha\fprod{G\otimes(\ones_q\ones_q^T),|P|}:\ a^*\mb{I}\preceq P \preceq b^*\mb{I}\}$ \\
\hspace*{0.2in}$\hat{C}\gets\hat{P}^{-1}$ \\
/* Compute the between response covariance matrix */ \\
\hspace*{0.2in} $\hat{\Gamma} \gets \frac{1}{n}\sum_{i\in\cI}\hat{C}^{ii}$ \\
/* Compute the spatial correlation parameter vector*/ \\
\hspace*{0.2in} $\hat{\th}\gets\argmin_{\th\in\Theta} \frac{1}{2}\sum_{i,j\in\cI}\norm{\rho_{ij}(\th)~\hat{\Gamma}-\hat{C}^{ij}}_F^2$ \\
\textbf{return}: $\hat{\Gamma}$ and $\hat{\th}$
\end{algorithm}
}

\vspace*{-2mm}
\noindent {\bf 3.1. Connection to SPS.} The main difference {between the SPS method and GSPS} is how $\hat{\Gamma}$, the estimator for $\Gamma^*$, is computed (when $p=1$, $\Gamma^*\in\reals_{++}$ corresponds to the variance parameter $\nu^*>0$ in SPS), and this difference in the way $\Gamma^*$ is estimated has significant implications on: a) the numerical stability of solving STAGE-II problem, and b) the proof technique to show consistency of the hyperparameter estimate as the number of process realization, $N$, increases.

{In the derivation of SPS, we {considered} the estimate $\hat{\nu}(\th)$ as an optimal response to the spatial correlation parameter $\th$, and show that $\hat{\nu}(\th)$ can be written in a closed form. In the second stage problem of SPS, given in \eqref{eq:nonconvexFrobStationary}, we solve a least squares problem over $\th$, i.e., \vspace*{-7mm}
$$\hat{\th}=\argmin_{\th\in\reals_+^d}\tfrac{1}{2}\sum_{i,j}(\hat{\nu}(\th)\rho(\bx_i,\bx_j,\th)-\hat{C}_{ij})^2.
\vspace*{-7mm}$$
Once $\hat{\th}$ is {computed}, we {estimate $\nu^*$} {using the best response function}: $\hat{\nu}=\hat{\nu}(\hat{\th})$. {The problem we {observed} with this approach in Davanloo et al. (2015) when applied to hyper-parameter estimation of a \emph{multivariate} GRF is that the second stage problem becomes challenging due to its {strong} nonconvexity,} 
which is {significantly aggravated relative to the univariate case due to the} multiplicative structure of $\hat{\Gamma}(\th)\rho(\bx_i,\bx_j,\th)$ {(when there is a single response, $p=1$, {this was not a problem for SPS)}. However, when $p>1$, {this same structure causes} numerical problems in the STAGE-II problem as one would need to solve \vspace*{-7mm}
\begin{equation}\label{eq:alternative}
\min_{\th\in\reals_+^d}\tfrac{1}{2}\norm{R(\th)\otimes\hat{\Gamma}(\th)-\hat{C}}_F^2. \vspace*{-5mm}
\end{equation}
Compared to {the} above problem, the STAGE-II problem we proposed in \eqref{eq:corrParamEst} for GSPS, i.e., $\min_{\th\in\reals_+^d}\tfrac{1}{2}\norm{R(\th)\otimes\hat{\Gamma}-\hat{C}}_F^2$,
behaves much better (although it is also non-convex in general), where $\hat{\Gamma}=\tfrac{1}{n}\sum_{i=1}^n\hat{C}_{ii}$ -- note that Theorem~\ref{thm:main} shows that {the} STAGE-II objective of GSPS is strongly convex around a neighborhood of the estimator. In all our numerical tests, standard nonlinear optimization techniques were able to compute a point close to the global minimizer very efficiently; however, this was not the case for the {problem} in \eqref{eq:alternative} when $p>1$ -- the same nonlinear optimization solvers we used for GSPS get stuck at a local minimizer far away from the global minimum. This is why we {propose} GSPS using {\eqref{eq:corrParamEst} in this paper}. Moreover, this new step of estimating $\hat{\Gamma}=\tfrac{1}{n}\sum_{i=1}^n\hat{C}_{ii}$ also helps us to give a much simpler proof for Theorem~4.}

{We {now comment on using GSPS to fit a \emph{multivariate} GRF as opposed to using SPS to fit $p$ independent \emph{univariate} GRFs to $p$ responses. As mentioned earlier, the latter can only be suboptimal in the presence of cross-covariances between the responses. Furthermore,} fitting a multivariate anisotropic GRF requires estimating $p(p+1)/2$ parameters for the between-response covariance matrix $\Gamma^*\in\mathbb{S}^p_{++}$ and $d$ parameters for the anisotropic spatial correlation function $\th\in\Theta\subseteq\mR^d_{++}$. On the other hand, fitting $p$ independent univariate anisotropic GRF requires estimating $p(d+1)$ parameters, i.e., for each univariate GRF one needs to estimate $d$ spatial correlation parameters and $1$ variance parameter. Therefore, if $d > \frac{p}{2}$, then fitting $p$ univariate GRF requires estimating more hyperparameters. Indeed, for some machine learning problems we have $d\gg p$, e.g., the classification problem for text categorization~\citep{joachims1998text} with $p>1$ related classes, and for these type of problems $d$ could be $\approx 10000$ and estimating $pd$ hyper-parameters will lead to \emph{overfitting}; hence, its prediction performance on test data will be worse compared to the prediction performance for multivariate GRF using~\eqref{eq:mvPred} {with $\th^*$ and $\Gamma^*$ replaced by $\hat{\th}$ and $\hat{\Gamma}$ which are computed as in~\eqref{eq:corrParamEst} and~\eqref{eq:gamma_hat}, respectively -- see Theorem~\ref{thm:main} for bounds on hyper-parameter approximation quality}. In Sections 4.2 and 4.3, the numerical tests conducted on simulated and real-data also show that the proposed GSPS method performs significantly better than modeling each response independently.}

\noindent{{\bf 3.2. Computational Complexity.} The computational bottleneck of GSPS method is the singular value decompositions~(SVD) that arises when solving the  STAGE-I problem using {the} ADMM algorithm. The per-iteration complexity is $\cO((np)^3)$. However, we should note that {the} STAGE-I problem is strongly convex; and ADMM has a linear rate~\citep{DengYin15}. Therefore, an $\epsilon$-optimal solution can be computed within $\cO(\log(1/\epsilon))$ iterations of ADMM. Thus, the overall complexity of solving STAGE-I is $\cO((np)^3~\log(1/\epsilon))$. Note that likelihood approximation methods do not have such iteration complexity results {due to the non-convexity of the approximate likelihood problem being solved}, even though they have cheaper per-iteration-complexity.} {In case of an isotropic process, the STAGE-II problem {in~\eqref{eq:corrParamEst} is one dimensional and it {can simply be} solved by using bisection.} If the process {is} anisotropic, then \eqref{eq:corrParamEst} is non-convex in general. That said, this problem is low dimensional due to $d\ll n$; hence, standard nonlinear optimization techniques can compute a local minimizer very efficiently -- note that we also show that STAGE-II objective is strongly convex around a neighborhood of the estimator. In all our numerical tests, STAGE-II problem is solved in much shorter time compared to STAGE-I problem; hence, it does not affect the overall complexity significantly. {In our code, we use golden-section search for isotropic processes, and Knitro's nonconvex solver to solve \eqref{eq:corrParamEst} for general anisotropic processes.}}

To eliminate $\cO((np)^3)$ complexity due to an SVD computation per ADMM iteration and due to computing $\hat{C}$, we used a segmentation scheme. We partition the data to $K$ segments, each one composed of $\approx n/K$ points chosen uniformly at random among $n$ locations, and assuming conditional independence between blocks. {In \citep{SPSarxiv}, we discussed two blocking/segmentation schemes: Spatial Segmentation (SS) and Random Selection (RS). Solving the STAGE-I problem with blocking schemes assumes a conditional independence assumption between blocks. In SS scheme such conditional independence assumption is potentially violated for points along {the common boundary between two blocks}. The RS scheme, however, works numerically better {for ``big-n"} scenarios. We believe that with RS scheme the infill asymptotics make the blocks conditionally independent to a reasonable degree.} {Using such blocking schemes,} the bottleneck complexity reduces to $\cO((np/K)^3)$ by solving STAGE-I problem for each block; hence, solving STAGE-I and computing $\hat{C}$, which we assume to be block diagonal, requires a total complexity of $\cO(\log(1/\epsilon)~(np)^3/K^2)$ and this bottleneck complexity can be controlled by properly choosing $K$.
\noindent {\bf 4. Numerical results}\par
\noindent {{In this section, comprehensive} simulation {analyses} are {reported for the study of} the performance of the proposed method. $N$ realizations of a zero-mean $p$-variate GRF with anisotropic spatial correlation function are simulated in a square domain $\cX=[0,10]^d$ over $n$ distinct points. The separable covariance function is the product of an anisotropic exponential spatial correlation function $\rho(\bx,\bx',\th^*)=\exp\big(-(\bx-\bx')^\top\diag(\th^*)(\bx-\bx')\big)$ and a $p$-variate between-response covariance matrix $\Gamma^*\in\mS^p_{++}$.} {The correlation function parameter vector $\th^*_{\ell}$ is sampled uniformly from the surface of a hyper-sphere in $\mR^d$ in the positive orthant for each replication $\ell\in\{1,...,L\}$. The between-response covariance matrix is $\Gamma^*_{\ell}=A^\top A$ for $A\in\mR^{w\times p}$ such that $w>p$, where the elements of $A$ are sampled independently from $\cN(0,1)$ per replication.} To solve the STAGE-I problem, the sparsity parameter $\alpha$ in \eqref{eq:convexProgram} is set equal to $c\sqrt{\log(np)/N}$ for some constant $c$. After some preliminary cross-validation studies, we set $c$ equal to $10^{-2}$. In our code, we use golden-section search for isotropic processes which requires a univariate optimization in STAGE-II, and use Knitro's nonconvex solver to solve \eqref{eq:corrParamEst} for general anisotropic processes. 

\noindent {\bf 4.1. Parameter estimate consistency}\par
{{We first} compare the quality of GSPS parameter estimate with the Maximum Likelihood Estimate (MLE). For 10 different replicates, we simulated $N$ independent realizations of GRF described above under different scenarios, and the mean of $\{\norm{\hat{\th}_\ell-\th^*}\}_{\ell=1}^{10}$ and $\{\norm{\hat{\Gamma}_\ell-\Gamma^*}_F\}_{\ell=1}^{10}$ are reported.}

{To deal with the nonconcavity of the likelihood, the MLEs are calculated from 10 random initial solutions and the best final solutions are reported. To solve problem in \eqref{eq:multVarCovexProgram} for the scenarios with $np>2000$, we used the {\em Random Selection} (RS) blocking scheme as described in \cite{SPSarxiv}. Tables~\ref{tbl:GSPSvsMLEvsCMGP4p2} and \ref{tbl:GSPSvsMLEvsCMGP4p5} show the results for {$p$-variate GRF models with $p=2$ and $p=5$,} respectively.}
\begin{table}[h!]
    \begin{center}
    \scriptsize
    \begin{threeparttable}
    \begin{tabular}[t]{cccccccccc}
    \toprule
     &  &  &\multicolumn{2}{c}{N=1} & \multicolumn{2}{c}{N=10} & \multicolumn{2}{c}{N=40} & \\
     \cmidrule(r){4-5} \cmidrule(r){6-7} \cmidrule(r){8-9}
     d & n & Method & \tiny $\norm{\hat{\th}_l-\th^*}_2$ & \tiny $\norm{\hat{\Gamma}_\ell-\Gamma^*}_F$ & \tiny $\norm{\hat{\th}_l-\th^*}_2$ & \tiny $\norm{\hat{\Gamma}_\ell-\Gamma^*}_F$ &  \tiny $\norm{\hat{\th}_l-\th^*}_2$ & \tiny $\norm{\hat{\Gamma}_\ell-\Gamma^*}_F$ & Time (sec) \\
    \cmidrule(r){1-1} \cmidrule(r){2-2} \cmidrule(r){3-3} \cmidrule(r){4-5} \cmidrule(r){6-7} \cmidrule(r){8-9} \cmidrule(r){10-10}
    \multirow{6}{*}{2} & \multirow{2}{*}{100}&GSPS&0.43&0.89&0.34&0.66&0.21&0.53&14.9\\
      &  & MLE & 0.38&0.78&0.36&0.70&0.26&0.61& 21.3\\
      \cmidrule(r){2-3} \cmidrule(r){4-5} \cmidrule(r){6-7} \cmidrule(r){8-9} \cmidrule(r){10-10}
      & \multirow{2}{*}{500} & GSPS & 0.39&0.81&0.29&0.60&0.13& 0.43 & 312.3 \\
      &  & MLE & 0.37&0.83&0.32&0.62&0.19& 0.50 & 496.1\\
      \cmidrule(r){2-3} \cmidrule(r){4-5} \cmidrule(r){6-7} \cmidrule(r){8-9} \cmidrule(r){10-10}
      & \multirow{2}{*}{1000} & GSPS & 0.33&0.73&0.23&0.57&0.08& 0.34 & 2342.5 \\
      &  & MLE & 0.32&0.74&0.28&0.58&0.11 & 0.40 & 3216.5 \\
   \cmidrule(r){1-10}
    \multirow{6}{*}{5} & \multirow{2}{*}{100} & GSPS & 0.49&0.96&0.38&0.71&0.26& 0.56 & 18.9  \\
      &  & MLE & 0.46&0.93&0.42&0.71&0.36 & 0.61 & 36.5\\
     \cmidrule(r){2-3} \cmidrule(r){4-5} \cmidrule(r){6-7} \cmidrule(r){8-9} \cmidrule(r){10-10}
      & \multirow{2}{*}{500} & GSPS & 0.44&0.88&0.33&0.69&0.29 & 0.53 & 527.4 \\
      &  & MLE & 0.46&0.89&0.38&0.67&0.34& 0.59 & 1023.4\\
      \cmidrule(r){2-3} \cmidrule(r){4-5} \cmidrule(r){6-7} \cmidrule(r){8-9} \cmidrule(r){10-10}
      & \multirow{2}{*}{1000} & GSPS & 0.40&0.81&0.30&0.62&0.29 & 0.50 & 2987.3\\
      &  & MLE & 0.43&0.92&0.35&0.66&0.34 & 0.56 & 6120.8\\
   \cmidrule(r){1-10}
    \multirow{6}{*}{10} & \multirow{2}{*}{100} & GSPS & 0.55&1.05&0.39&0.82&0.35 & 0.58 & 29.1 \\
      &  & MLE & 0.57&1.02&0.56&0.89&0.53 & 0.69 & 75.2 \\
      \cmidrule(r){2-3} \cmidrule(r){4-5} \cmidrule(r){6-7} \cmidrule(r){8-9} \cmidrule(r){10-10}
      & \multirow{2}{*}{500} & GSPS & 0.47&0.99&0.35&0.73&0.31& 0.49 & 613.8 \\
      &  & MLE & 0.54&1.00&0.53&0.81&0.50 & 0.58 & 4125.6\\
      \cmidrule(r){2-3} \cmidrule(r){4-5} \cmidrule(r){6-7} \cmidrule(r){8-9} \cmidrule(r){10-10}
      & \multirow{2}{*}{1000} & GSPS & 0.41&0.89&0.31&0.71&0.29 & 0.43 & 4920.5 \\
      &  & MLE & 0.51&0.97&0.49&0.76&0.47 & 0.50 & 7543.3\\
    \bottomrule
    \end{tabular}
    \end{threeparttable}
    \end{center}
    \vspace*{-0.2cm}
     \caption{Comparison of GSPS vs. MLE for p=2 response variables}
    \label{tbl:GSPSvsMLEvsCMGP4p2}
\end{table}

\begin{table}[h!]
    \begin{center}
    \scriptsize
    \begin{threeparttable}
    \begin{tabular}[t]{cccccccccc}
    \toprule
     &  &  &\multicolumn{2}{c}{N=1} & \multicolumn{2}{c}{N=10} & \multicolumn{2}{c}{N=40} & \\
     \cmidrule(r){4-5} \cmidrule(r){6-7} \cmidrule(r){8-9}
     d & n & Method & \tiny $\norm{\hat{\th}_l-\th^*}_2$ & \tiny $\norm{\hat{\Gamma}_\ell-\Gamma^*}_F$ & \tiny $\norm{\hat{\th}_l-\th^*}_2$ & \tiny $\norm{\hat{\Gamma}_\ell-\Gamma^*}_F$ &  \tiny $\norm{\hat{\th}_l-\th^*}_2$ & \tiny $\norm{\hat{\Gamma}_\ell-\Gamma^*}_F$ & Time (sec) \\
    \cmidrule(r){1-1} \cmidrule(r){2-2} \cmidrule(r){3-3} \cmidrule(r){4-5} \cmidrule(r){6-7} \cmidrule(r){8-9} \cmidrule(r){10-10}
    \multirow{6}{*}{2} & \multirow{2}{*}{100}&GSPS&0.66&1.43&0.38&0.91&0.30&0.76&17.2\\
      &  & MLE & 0.62&1.40&0.57&1.30&0.41&1.28&26.3\\
      \cmidrule(r){2-3} \cmidrule(r){4-5} \cmidrule(r){6-7} \cmidrule(r){8-9} \cmidrule(r){10-10}
      & \multirow{2}{*}{500} & GSPS & 0.58&1.35&0.35&0.87&0.27& 0.73& 363.4 \\
      &  & MLE & 0.57&1.32&0.51&1.24&0.39&1.15& 512.5\\
      \cmidrule(r){2-3} \cmidrule(r){4-5} \cmidrule(r){6-7} \cmidrule(r){8-9} \cmidrule(r){10-10}
      & \multirow{2}{*}{1000} & GSPS & 0.49&1.24&0.31&0.82&0.24&0.70& 2835.4 \\
      &  & MLE & 0.49&1.22&0.42&1.19&0.33 & 1.10 & 3913.7 \\
   \cmidrule(r){1-10}
    \multirow{6}{*}{5} & \multirow{2}{*}{100} & GSPS & 0.73&1.49&0.50&0.92&0.39& 0.79& 25.6  \\
      &  & MLE & 0.71&1.47&0.62&1.36&0.49 & 1.35 & 53.1\\
     \cmidrule(r){2-3} \cmidrule(r){4-5} \cmidrule(r){6-7} \cmidrule(r){8-9} \cmidrule(r){10-10}
      & \multirow{2}{*}{500} & GSPS & 0.60&1.41&0.44&1.00&0.36& 0.75& 665.6 \\
      &  & MLE & 0.64&1.43&0.54&1.26&0.44& 1.24& 1424.3\\
      \cmidrule(r){2-3} \cmidrule(r){4-5} \cmidrule(r){6-7} \cmidrule(r){8-9} \cmidrule(r){10-10}
      & \multirow{2}{*}{1000} & GSPS & 0.54&1.32&0.39&1.06&0.31 & 0.74& 3783.6\\
      &  & MLE & 0.63&1.36&0.47&1.20&0.38 & 1.17& 7346.7\\
   \cmidrule(r){1-10}
    \multirow{6}{*}{10} & \multirow{2}{*}{100} & GSPS & 0.77&1.57&0.59&0.98&0.52 & 0.85 & 45.3 \\
      &  & MLE & 0.79&1.60&0.67&1.39&0.61 & 1.43& 87.2 \\
      \cmidrule(r){2-3} \cmidrule(r){4-5} \cmidrule(r){6-7} \cmidrule(r){8-9} \cmidrule(r){10-10}
      & \multirow{2}{*}{500} & GSPS & 0.65&1.47&0.54&1.03&0.46& 0.81 & 717.6 \\
      &  & MLE & 0.74&1.56&0.60&1.31&0.52 & 1.37& 4994.3\\
      \cmidrule(r){2-3} \cmidrule(r){4-5} \cmidrule(r){6-7} \cmidrule(r){8-9} \cmidrule(r){10-10}
      & \multirow{2}{*}{1000} & GSPS & 0.59&1.39&0.49&1.08&0.42 & 0.75 & 6001.3 \\
      &  & MLE & 0.66&1.48&0.53&1.27&0.45 & 1.29& 8223.1\\
    \bottomrule
    \end{tabular}
    \end{threeparttable}
    \end{center}
    \vspace*{-0.2cm}
     \caption{Comparison of GSPS vs. MLE for p=5 response variables}
    \label{tbl:GSPSvsMLEvsCMGP4p5}
\end{table}

{{For fixed $n$}, the parameter {estimation} {error increases} with the dimension of the input space $d$, which is reasonable due to higher number of parameters in the anisotropic correlation function. Furthermore, the errors {increase} with $p$, the number of responses. As expected, increasing the point density $n$ helps in {improving the} estimation of the parameters, i.e., reducing the errors, a result in accordance to the expected effect of infill asymptotics.}

{Overall, the GSPS method results in better parameter estimates compared to MLE with relative performance improvements becoming more obvious as $p$ and $d$ increase. Furthermore, as the number of realizations $N$ increases GSPS performs consistently better than MLE. Note that the robust performance of the proposed method is theoretically guaranteed for $N\geq N_0$ from Theorem~\ref{thm:main}.}

\noindent {\bf 4.2. Prediction consistency}\par
{To evaluate prediction performance, we compared {the} GSPS method {against using} multiple \emph{univariate} SPS (mSPS) {fits} and {against} the Convolved Multiple output Gaussian Process (CMGP) {method} by \cite{Alvarez2011}. Given the size of the training data $n$, {{none of the approximations in \citep{Alvarez2011} with induced points} 
were used, this corresponds to what Alvarez and Lawrence refer as the CMGP method.}}

{For 10 different replicates, we simulated $N$ independent realizations of the same GRF, {which is defined at the beginning of Section~4}, under different scenarios to learn the model parameters.
We also simulated the p-variate response over {a} fixed set of $n_0=1000$ test locations per replicate. The mean of the conditional distribution {$p(\by(\bx_0)|\{\by^{(r)}\}_{r=1}^N,\cD^x)$} is used to predict at these test locations and, then, the mean of Mean Squared Prediction Error (MSPE) over 10 replicates, $p$ outputs, and $n_0$ test points are reported for $p=2$ and $p=5$ in Tables \ref{tbl:MSE_GSPSvsCMGP_p2} and \ref{tbl:MSE_GSPSvsCMGP_p5}, respectively.} 

\begin{table*}[h!]
    \scriptsize
    \begin{threeparttable}
    \caption{MSPE comparison for $p=2$ response variables}
    \label{tbl:MSE_GSPSvsCMGP_p2}
    \begin{tabular}[t]{cccccc}
    \toprule
     d & n & Method & $N=1$ & $N=10$ & $N=40$ \\
    \cmidrule(r){1-1} \cmidrule(r){2-2} \cmidrule(r){3-3} \cmidrule(r){4-6}
    \multirow{6}{*}{2} & \multirow{3}{*}{100}&mSPS&7.02&2.68&2.08\\
      &  & GSPS & 6.71&2.12&1.44\\
      &  & CMGP & 6.40&2.39&1.61\\
      \cmidrule(r){2-2} \cmidrule(r){3-3}\cmidrule(r){4-6}
      & \multirow{3}{*}{400} & mSPS & 6.76&2.22&1.87 \\
      &  & GSPS & 5.53&1.89&0.91\\
      &  & CMGP & 5.16&2.04&1.33\\
   \cmidrule(r){1-6}
    \multirow{6}{*}{5} & \multirow{3}{*}{100} & mSPS & 7.12&3.09&2.39  \\
      &  & GSPS & 6.98&2.45&1.52 \\
      &  & CMGP & 6.74&2.95&1.99\\
     \cmidrule(r){2-2} \cmidrule(r){3-3} \cmidrule(r){4-6}
      & \multirow{3}{*}{400} & mSPS & 7.34&3.04&2.24 \\
      &  & GSPS & 5.88&2.45&1.05\\
      &  & CMGP & 6.32&2.89&1.73\\
   \cmidrule(r){1-6}
    \multirow{6}{*}{10} & \multirow{3}{*}{100} & mSPS & 7.83&4.15&3.23 \\
     &  & GSPS & 7.11&3.34&2.02 \\
     &  & CMGP & 6.97&3.67&2.39\\
      \cmidrule(r){2-2} \cmidrule(r){3-3} \cmidrule(r){4-6}
      & \multirow{3}{*}{400} & mSPS & 7.65&3.53&2.65 \\
      &  & GSPS & 6.13&2.96&1.22\\
      &  & CMGP & 6.63&3.32&2.28\\
    \bottomrule
    \end{tabular}
    \end{threeparttable}
    \qquad
    \begin{threeparttable}
    \caption{MSPE comparison for $p=5$ response variables}
    \label{tbl:MSE_GSPSvsCMGP_p5}
    \begin{tabular}[t]{cccccc}
    \toprule
     d & n & Method & $N=1$ & $N=10$ & $N=40$ \\
    \cmidrule(r){1-1} \cmidrule(r){2-2} \cmidrule(r){3-3} \cmidrule(r){4-6}
    \multirow{6}{*}{2} & \multirow{3}{*}{100}&mSPS&7.83&4.42&3.08\\
      &  & GSPS & 7.05&3.89&2.11\\
      &  & CMGP & 6.74&3.71&2.49\\
      \cmidrule(r){2-2} \cmidrule(r){3-3}\cmidrule(r){4-6}
      & \multirow{3}{*}{400} & mSPS & 7.51&3.78&2.18 \\
      &  & GSPS & 6.81&2.96&1.32\\
      &  & CMGP & 6.23&3.36&2.03\\
   \cmidrule(r){1-6}
    \multirow{6}{*}{5} & \multirow{3}{*}{100} & mSPS & 8.54&5.30&3.32  \\
      &  & GSPS & 7.19&4.43&2.01 \\
      &  & CMGP & 7.10&4.97&2.86\\
     \cmidrule(r){2-2} \cmidrule(r){3-3} \cmidrule(r){4-6}
      & \multirow{3}{*}{400} & mSPS & 8.22&4.15&2.63 \\
      &  & GSPS & 7.00&3.10&1.45\\
      &  & CMGP & 7.45&4.04&2.65\\
   \cmidrule(r){1-6}
    \multirow{6}{*}{10} & \multirow{3}{*}{100} & mSPS & 9.23&5.67&3.43 \\
      &  & GSPS & 7.23&4.68&2.19 \\
      &  & CMGP & 8.53&5.25&3.24\\
      \cmidrule(r){2-2} \cmidrule(r){3-3} \cmidrule(r){4-6}
      & \multirow{3}{*}{400} & mSPS & 8.54&4.24&2.94 \\
      &  & GSPS & 7.08&3.23&1.63\\
      &  & CMGP & 7.82&4.20&2.87\\
    \bottomrule
    \end{tabular}
    \end{threeparttable}
    \vspace*{-0.2cm}
    \caption*{The mean of the Mean Squared Prediction Error (MSPE) comparison of multiple SPS (mSPS),  Generalized SPS (GSPS) and Convolved Multiple Gaussian Process (CMGP) of Alvarez and Lawrence (2011) for $p$ response variables}
\end{table*}

%

One important observation is that the prediction performance of GSPS is almost ubiquitously better than mSPS method. This means that learning the {cross-covariance} between different responses provides {additional useful} information that helps improve the prediction performance of the joint model, GSPS, over mSPS. Comparing GSPS vs. CMGP, we observe relatively better performance of CMGP over GSPS {when $N=1$ in a lower dimensional input space, e.g., $(N,d)=(1,2)$. {However,} as $n$, the number of locations, increases, the GSPS predictions become better than CMGP even if $N=1$, e.g., for $(N,d)=(1,5)$, GSPS does better than CMPG for $n=400$.} The prediction performance of GSPS improves significantly with increasing $N$, the number of realizations of the process. In $d=10$ dimensional space, GSPS is performing consistently better, even when $N=1$ for both $p=2$ and $p=5$. However, we should note that CMGP with 50 inducing points is significantly faster than GSPS in the learning phase.

\noindent {\bf 4.3. Real data set}\par
{We now use a real data set to compare} the prediction performance of GSPS {with the naive method of using} multiple univariate SPS (mSPS) {fits}, and {with the} two {approximation} methods proposed in Alvarez and Lawrence (2011). The data set consists of n=9635 $(x, y, z)$ measurements {obtained by a laser scanner from a free-form} surface of a manufactured product. \cite{delCastillo2015} proposed modeling each coordinate, separately, as a function of the corresponding $(u, v)$ {surface coordinates (obtained using the ISOMAP algorithm by \cite{Tenenbaum2000})}. These $(u,v)$ {coordinates} are {selected} such that their pairwise Euclidean distance is equal to the {pairwise} geodesic distances between their corresponding $(x, y, z)$ points {along the surface.} We first model $(x(u,v), y(u,v), z(u,v))$ as a multivariate GRF using GSPS and compare {against} fitting $p=3$ independent {\em univariate} GRF using {the} SPS method (mSPS).

Given the large size of the data set, n=9635, we use {the} {\em Random Selection} blocking scheme as described in \cite{SPSarxiv} for varying number of blocks; hence, {there are} different number of observations per block. Table~\ref{tbl:MSE_realdata_1} reports the MSPE and the corresponding standard errors (std. error) obtained from 10-fold cross validation.
\begin{table}[h!]
    \begin{center}
    \begin{tabular}{lccc}
    \toprule
     Method & n/block & MSPE & std. error \\
     \hline
     mSPS & 100   & 0.0932 & 0.0047 \\
     mSPS & 500   & 0.0621 & 0.0021 \\
     mSPS & 1000 & 0.0842 & 0.0013 \\
     \hline
     GSPS & 100   & 0.0525 & 0.0023 \\
     GSPS & 500   & 0.0167 & 0.0012 \\
     GSPS & 1000 & 0.0285 & 0.0019 \\
    \bottomrule
    \end{tabular}
    \end{center}
    \vspace{-0.25cm}
     \caption{10-fold cross validation to evaluate prediction performance of multiple SPS (mSPS) and GSPS for the metrology data set with n=9635 data points.}
    \label{tbl:MSE_realdata_1}
\end{table}

\vspace{-0.5cm}
{According to the results reported in Table~\ref{tbl:MSE_realdata_1}, the best predictions are obtained when {the} number of observations per block is 500. We compare the GSPS method with 500 data points per block against the two approximation methods developed in \cite{Alvarez2011}, namely {the} Full Independent Training Conditional (FITC) {method and the} Partially Independent Training Conditional (PITC) {method}. For different number of inducing points $K\in\{100, 500, 1000\}$, we ran both methods on the data set. The locations of the inducing points along with the hyper-parameters of their model are found by maximizing the likelihood through a scaled conjugate gradient method as proposed by~\cite{Alvarez2011}. Initially, the inducing points are located completely at random.}

\begin{table}[h!]
    \begin{center}
    \begin{tabular}{lcc}
    \toprule
     Method & MSPE & std. error \\
     \hline
     mSPS (n/block=500) & 0.0621 & 0.0021 \\
     \hline
     GSPS (n/block=500) & 0.0167 & 0.0012 \\
     \hline
     FITC (K=100)     & 0.0551 & 0.0042 \\
     FITC (K=500)     & 0.0463 & 0.0011 \\
     FITC (K=1000)   & 0.0174 & 0.0010 \\
     \hline
     PITC (K=100)     & 0.0698 & 0.0062 \\
     PITC (K=500)     & 0.0421 & 0.0021 \\
     PITC (K=1000)   & 0.0197 & 0.0007 \\
    \bottomrule
    \end{tabular}
    \end{center}
    \vspace{-0.2cm}
     \caption{10-fold cross validation to compare prediction performance of mSPS, GSPS vs. FITC and PITC methods by Alvarez and Lawrence (2011) for the metrology data set with n=9635 data points}
    \label{tbl:MSE_realdata_2}
    \vspace{-0.5cm}
\end{table}

{{Intuitively, the} best prediction performance for both FITC and PITC approximations are obtained for {the larger $K$ values as this represents a better approximation of the underlying GRF}. {The GSPS method is performing better than FITC and PITC for all $K$ parameter choice.} 
Finally, as expected, {fitting $p$ univariate GRF models (mSPS)} is performing {worse than} the multivariate methods.}

\section{Conclusions and future research}
\noindent 
A new two-stage estimation method is proposed to fit multivariate Gaussian Random Field (GRF) models with separable covariance functions. Theoretical convergence {rates} for the estimated between-response covariance matrix and the estimated correlation function parameter are established {with respect to the number of process realizations}. Numerical studies confirm the theoretical results. From {a} statistical perspective, the first stage provides a Gaussian Markov Random Field (GMRF) approximation to the underlying GRF without discretizing the input space or assuming a sparsity structure for the precision matrix. From an optimization perspective, the first stage helps to ``zoom into" the region where the global optimal covariance parameters exist, facilitating the second stage least-squares optimization.

In this research, we considered separable covariance functions. Future research may consider non-separable covariance functions, e.g., {convolutions} of covariance functions, or kernel {convolutions}. {As another potential future work, we also propose estimating {the cross-covariance matrix} $\hat{\Gamma}$ at the {outset} by solving $\hat{\Gamma}=\argmin_\Gamma\{\norm{\Gamma-\frac{1}{n}\sum_{i=1}^n S^{ii}}_F:\ \Gamma\succeq\epsilon \mathbf{I}\}$. Then we propose solving the following problem as the new STAGE-I: \vspace*{-4mm}
{\small
\begin{equation*}
\begin{split}
\hat{P}_\rho = &\argmin_{P_\rho}
\fprod{S,P_\rho\otimes\hat{\Gamma}^{-1}}-\log\det(P_\rho\otimes \hat{\Gamma}^{-1})+\alpha\fprod{G\otimes(\ones_p\ones_p^\top),|P_\rho\otimes \hat{\Gamma}^{-1}|} \\
       & \quad \hbox{s.t.} \qquad {a^*}{\lambda_{\max}(\hat{\Gamma})} I \preceq P_\rho\preceq {b^*}{\lambda_{\min}(\hat{\Gamma})} I.
\end{split}
\vspace*{-4mm}
\end{equation*}
}
\noindent Note that $\log\det(P_\rho\otimes \hat{\Gamma}^{-1})=p\log\det(P_\rho)-n\log\det(\hat{\Gamma})$. Hence, there exists some $S_\rho,~G_\rho\in\mathbb{S}^n$, which can be computed very efficiently, such that
{\small
\begin{equation*}
\hat{P}_\rho = \argmin_{P_\rho}\left\{\fprod{S_\rho,P_\rho}-p\log\det(P_\rho)+\alpha\fprod{G_\rho,|P_\rho|}:\ {a^*}{\lambda_{\max}(\hat{\Gamma})} I \preceq P_\rho\preceq {b^*}{\lambda_{\min}(\hat{\Gamma})}I\right\}.
\vspace*{-3mm}
\end{equation*}}%
{Such an approach would be} much easier to solve in terms of computational complexity -- the overall complexity is $\cO(\log(1/\epsilon)n^3)$ for this STAGE-I problem. {Further work could be devoted to proving} consistency of the resulting estimator {and} its rate {could be} compared with {the} $\log(1/\epsilon^2)$ of GSPS.}

\markboth{\hfill{\footnotesize\rm Sam Davanloo Tajbakhsh, Necdet Serhat Aybat, and Enrique Del Castillo} \hfill}
{\hfill {\footnotesize\rm Generalized SPS Algorithm} \hfill}
{\singlespacing
\bibhang=1.7pc
\bibsep=2pt
\fontsize{9}{14pt plus.8pt minus .6pt}\selectfont
\renewcommand\bibname{}

\vskip 2cm
\noindent
Dept. of Integrated Systems Engineering, Columbus OH 43210 USA

\noindent
E-mail: davanloo-tajbakhsh.1@osu.edu
\vskip 18pt

\noindent
Dept. of Industrial and Manufacturing Engineering, University Park PA 16802 USA

\noindent
E-mail: nsa10@psu.edu
\vskip 18pt

\noindent
Dept. of Industrial and Manufacturing Engineering, University Park PA 16802 USA

\noindent
E-mail: exd13@psu.edu
\vskip 18pt
}



\newpage

\section{Appendix}\mbox{}

\noindent {\bf Proof of Theorem~\ref{thm:statAnalysis1}}\\
The proof given below is a slight modification of the proof of Theorem~3.1 in~\cite{SPSarxiv} to obtain tighter bounds. For the sake of completeness, we provide the proof. Through the change of variables $\Delta:=P-P^*$, we can write \eqref{eq:convexProgram} in terms of $\Delta$ as \vspace*{-2mm}
\begin{small}
\begin{equation*}
\hat{\Delta} = \argmin \{F(\Delta):=\fprod{S,\Delta+P^*}-\log\det(\Delta+P^*)+\alpha\fprod{G\otimes(\ones_p\ones_p^\top),|\Delta+P^*|}: \Delta\in\cF\},
\vspace*{-2mm}
\end{equation*}
\end{small}
\vspace*{-8mm}

\noindent where $\cF:=\{\Delta\in\reals^{np\times np}:\ \Delta=\Delta^\top, \ a^*\mb{I}\preceq \Delta+P^* \preceq b^*\mb{I}\}$. Note that $\hat{\Delta}=\hat{P}-P^*$. Define $g(\Delta):=-\log\det(\Delta+P^*)$ on $\cF$. $g(.)$ is strongly convex over $\cF$ with modulus $1/{b^*}^2$; hence, for any $\Delta\in\cF$, it follows that
$g(\Delta)-g(\mathbf{0}) \geq -\fprod{{P^*}^{-1},\Delta}+\frac{1}{2{b^*}^2}\norm{\Delta}_F^2$.
Let $H(\Delta):=F(\Delta)-F(\mathbf{0})$ and $S_{\Delta}:=\{\Delta\in\cF: \norm{\Delta}_F > 2{b^*}^2p(n+\norm{G}_F)\alpha\}$. Under probability event $\Omega=\{\norm{\vect(S^{ij}-\Sigma^{ij})}_\infty\leq\alpha,\ \forall(i,j)\in\cI\times\cI\}$, for any $\Delta\in S_{\Delta}\subset\cF$, 
\vspace*{-2mm}
\begin{small}
\begin{align*}
H(\Delta) &\geq \fprod{S,\Delta}-\fprod{{P^*}^{-1},\Delta}+\frac{1}{2{b^*}^2}\norm{\Delta}_F^2+\alpha\fprod{G\otimes(\ones_p\ones_p^\top),|\Delta+P^*|}-\alpha\fprod{G,|P^*|}\\
          &\geq \frac{1}{2{b^*}^2}\norm{\Delta}_F^2+\fprod{\Delta,S-C^*}-\alpha\fprod{G\otimes(\ones_p\ones_p^\top),|\Delta|}\\
          &\geq \frac{1}{2{b^*}^2}\norm{\Delta}_F^2-\alpha p(n+\norm{G}_F)\norm{\Delta}_F> 0,
          \vspace*{-4mm}
\end{align*}
\end{small}
\vspace*{-10mm}

\noindent where the second inequality follows from the triangle inequality, the third one holds under the probability event $\Omega$ and follows from the Cauchy-Schwarz inequality, and the final strict one follows from the definition of $S_{\Delta}$. Since $F(\mathbf{0})$ is a constant, $\hat{\Delta} = \argmin \{H(\Delta): \Delta\in\cF\}$. Hence, $H(\hat{\Delta})\leq H(\mathbf{0})=0$. Therefore, 
$\hat{\Delta}\not\in S_{\Delta}$ under the probability event $\Omega$. It is important to note that $\hat{\Delta}$ satisfies the first two conditions given in the definition of $S_{\Delta}$. This implies $\norm{\hat{\Delta}}_F\leq 2{b^*}^2p(n+\norm{G}_F)\alpha$ whenever the probability event $\Omega$ is true. Hence, \vspace*{-3mm}
\begin{small}
\begin{align*}
\mbox{Pr}\left(\norm{\hat{P}-P^*}_F\leq 2 {b^*}^2 p (n+\norm{G}_F) \alpha\right) &\geq \mbox{Pr}\left(\norm{\vect(S^{ij}-\Sigma^{ij})}_\infty\leq\alpha,\  \forall(i,j)\in\cI\times\cI\right) \\
            &= 1-\mbox{Pr}\left(\max_{i,j\in\cI}\norm{\vect(S^{ij}-\Sigma^{ij})}_\infty>\alpha\right)\\
            &\geq 1-\sum_{i,j\in\cI}\mbox{Pr}\left(\norm{\vect(S^{ij}-\Sigma^{ij})}_\infty>\alpha\right).
\vspace*{-3mm}
\end{align*}
\end{small}
\vspace*{-12mm}

\noindent Recall that $S=\frac{1}{N}\sum_{r=1}^N\by^{(r)}{\by^{(r)}}^\top$ and $\by^{(r)}=[y_i^{(r)}]_{i\in\cI}$ for $r=1,\ldots,N$. Note $\Sigma^{ii}=\Gamma^*$ for $i\in\cI$; hence, $y_i^{(r)}\sim\cN(\mathbf{0},\Gamma^*)$, i.e., multivariate Gaussian with mean $\mathbf{0}$ and covariance matrix $\Gamma^*$, for all $i$ and $r$. 
Therefore, Lemma~1 in~\cite{Ravikumar11high} implies
$\mbox{Pr}\left(\norm{\vect(S^{ij}-\Sigma^{ij})}_\infty>\alpha \right)\leq B_\alpha$
for $\alpha\in\left(0,40\max_i\Gamma^*_{ii}\right)$, where 
\begin{equation*}
B_\alpha:=4p^2\exp\left(\frac{-N}{2}\left(\frac{\alpha}{40\max_i \Gamma^*_{ii}}\right)^2\right). 
\end{equation*}
Hence, given any $M>0$, by requiring 
\newline $N \geq \left(\frac{40\max_i\Gamma^*_{ii}}{\alpha}\right)^2 N_0$, we get $B_\alpha\leq \frac{1}{n^2}(np)^{-M}$. 
Thus, for any $N\geq N_0$, we have 
\begin{equation*}
\sum_{i,j\in\cI}\mbox{Pr}\left(\norm{\vect(S^{ij}-\Sigma^{ij})}_\infty>\alpha\right)\leq (np)^{-M}
\end{equation*}
for all $40\max_i\Gamma^*_{ii}\sqrt{\frac{N_0}{N}}\leq\alpha\leq 40\max_i\Gamma^*_{ii}$.
\par
\noindent {\bf Proof of Theorem~\ref{thm:main}}\\ \vspace*{-3mm}
For the sake of simplicity of the notation let $\Phi=(\Gamma,C)\in\mS^n\times\mS^{np}$, and define $\norm{(\Gamma,C)}_a:=\max\{\norm{\Gamma}_2,\norm{C}_2\}$ over the product vector space $\mS^n\times\mS^{np}$; also let $\Psi=(\th,\Gamma,C)\in\reals^q\times\mS^n\times\mS^{np}$, and define $\norm{(\th,\Gamma,C)}_b:=\norm{\th}+\norm{(\Gamma,C)}_a$ over the product vector space $\reals^q\times\mS^n\times\mS^{np}$. Throughout the proof $\hat{\Phi}:=(\hat{\Gamma},\hat{C})$, $\Phi^*:=(\Gamma^*,C^*)$, and $\hat{\Psi}:=(\hat{\th},\hat{\Phi})$, $\Psi^*:=(\th^*,\Phi^*)$.

As $\th^*\in\intr(\Theta)$, there exists $\delta_1>0$ such that $\B{2}{\th^*,\delta_1}\subset\Theta$. Moreover, since $\rho(\mb{x},\mb{x}';\th)$ is twice continuously differentiable in $\th$ over $\Theta$ for all $\mb{x},\mb{x}'\in\cX$, $R:\Theta\rightarrow\mS^n$ is also twice continuously differentiable. Hence, from \eqref{eq:hessian}, it follows that $\grad^2 f(\th;\Gamma,C)$ is continuous in $\Psi=(\th,\Gamma,C)$; and since eigenvalues of a matrix are continuous functions of matrix entries, $\lambda_{\min}\left(\grad^2 f(\th;\Gamma,C)\right)$ is continuous in $\Psi$ on $\B{b}{\Psi^*,\delta_1}$ as well. Therefore, it follows from Lemma~\ref{lem:strong_convexity} that there exists $0<\delta_2\leq \delta_1$ such that $\grad_\th^2 f(\th;\Gamma,C)\succeq \frac{\gamma^*}{2}I$ for all $\Psi=(\th,\Gamma,C)\in\B{b}{\Psi^*,\delta_2}$.

Let $\cQ:=\Bc{a}{\Phi^*,\tfrac{1}{2}\delta_2}$ and $\Theta':=\Theta\cap\Bc{2}{\th^*,\tfrac{1}{2}\delta_2}$, i.e., \vspace*{-5mm}
\begin{align}
\label{eq:sets_def}
\cQ&=\{(\Gamma, C):\ \max\{\norm{\Gamma-\Gamma^*}_2,~\norm{C-C^*}_2\}\leq\tfrac{1}{2}\delta_2\},\\
\Theta'&=\{\th\in\Theta:\ \norm{\th-\th^*}\leq\tfrac{1}{2}\delta_2\}.
\end{align}
\vspace*{-15mm}

\noindent Clearly $f$ is strongly convex in $\th$ over $\Theta'$ with convexity modulus $\frac{\gamma^*}{2}$ for all $(\Gamma,C)\in\cQ$. Define the unique minimizer over $\Theta'$: \vspace*{-5mm}
\begin{equation}
\label{eq:parametric_min}
\th(\Gamma,C):=\argmin_{\th\in\Theta'}f(\th;\Gamma,C).
\vspace*{-3mm}
\end{equation}
Since $\Theta'$ is a convex compact set and $f(\th;\Gamma,C)$ is jointly continuous in $\Psi=(\th,\Gamma,C)$ on $\Theta'\times\cQ$, from Berge's Maximum Theorem -- see~\cite{OK07real}, $\th(\Gamma,C)$ is continuous at $(\Gamma^*,C^*)$ and $\th(\Gamma^*,C^*)=\th^*$. Therefore, for any $0<\epsilon\leq\tfrac{1}{2}\delta_2$, there exists $\delta(\epsilon)>0$ such that $\delta(\epsilon)\leq\tfrac{1}{2}\delta_2$ and $\norm{\th(\Gamma,C)-\th^*}<\epsilon$ for all $\Phi=(\Gamma,C)$ satisfying $\norm{\Phi-\Phi^*}_a<\delta(\epsilon)$.

Fix some arbitrary $\epsilon\in(0,\tfrac{1}{2}\delta_2]$. Let $\hat{P}(\epsilon)$ be computed as in \eqref{eq:multVarCovexProgram} with \newline 
$\alpha(\epsilon)=40\max\limits_{i=1,...,p}(\Gamma_{ii}^*)\sqrt{\frac{N_0}{N(\epsilon)}}$ where sample size $N(\epsilon)$ denotes the number of process realizations (chosen depending on $\epsilon>0$). Hence, Theorem~\ref{thm:gammaConvergence} implies that by choosing $N(\epsilon)$ sufficiently large, we can guarantee that $\hat{C}(\epsilon)=\hat{P}(\epsilon)^{-1}$, and $\hat{\Gamma}(\epsilon)$ defined as in \eqref{eq:gamma_hat} satisfy \vspace*{-5mm}
\begin{equation}
\label{eq:ball_ineq_1}
\max\{\norm{\hat{C}(\epsilon)-C^*}_2,~\norm{\hat{\Gamma}(\epsilon)-\Gamma^*}_2\}<\delta(\epsilon)\leq\tfrac{1}{2}\delta_2,
\vspace*{-3mm}
\end{equation}
i.e., $\norm{\hat{\Phi}-\Phi^*}_a<\delta(\epsilon)$, with high probability. In the rest of the proof, for the sake of notational simplicity, we do not explicitly show the dependence on the fixed tolerance $\epsilon$; instead we simply write $\hat{P}$, $\hat{C}$, and $\hat{\Gamma}$.

Note that due to the parametric continuity discussed above, \eqref{eq:ball_ineq_1} implies that $\norm{\th(\hat{\Gamma},\hat{C})-\th^*}<\epsilon\leq \tfrac{1}{2}\delta_2$. Hence, the norm-ball constraint in the definition of $\Theta'$ will not be tight when $f(\th;\hat{\Gamma},\hat{C})$ is minimized over $\th\in\Theta'$, i.e., 
\begin{equation*}
\th(\hat{\Gamma},\hat{C})=\argmin_{\th\in\Theta'}f(\th;\hat{\Gamma},\hat{C})=\argmin_{\th\in\Theta}f(\th;\hat{\Gamma},\hat{C})=:\hat{\th},
\end{equation*}
see \eqref{eq:corrParamEst} for the definition of $\hat{\th}$. Therefore, $\norm{\hat{\Psi}-\Psi^*}_b<\delta_2\leq\delta_1$, i.e., \vspace*{-5mm}
\begin{equation}
\label{eq:ball_ineq_2}
\norm{\hat{\th}-\th^*}+\norm{(\hat{\Gamma},\hat{C})-(\Gamma^*,C^*)}_a<\delta_2\leq\delta_1.
\vspace*{-3mm}
\end{equation}
This implies that $\hat{\th}\in\intr\Theta$; thus, $\grad_\th f(\hat{\th};\hat{\Gamma},\hat{C})=\mathbf{0}$.

Although one can establish a direct relation between $\delta(\epsilon)$ and $\epsilon$ by showing that $\th(\Gamma,C)$ is Lipschitz continuous around $\th^*$, we will show a more specific result by upper bounding the error $\norm{\hat{\th}-\th^*}$ using $\norm{\hat{\Phi}-\Phi^*}_a$. Indeed, since $(\hat{\Gamma},\hat{C})\in\cQ$, $f(\th;\hat{\Gamma},\hat{C})$ is strongly convex in $\th\in\Theta'$ with modulus $\tfrac{1}{2}\gamma^*$; hence, $\th^*\in\Theta'$ and $\hat{\th}\in\Theta'$ imply that \vspace*{-5mm}
\begin{align}
\frac{\gamma^*}{2}\norm{\hat{\th}-\th^*}^2
&\leq \fprod{\grad_\th f(\th^*;\hat{\Gamma},\hat{C})-\grad_\th f(\hat{\th};\hat{\Gamma},\hat{C}),~\th^*-\hat{\th}} \nonumber\\
& = \fprod{\grad_\th f(\th^*;\hat{\Gamma},\hat{C})-\grad_\th f(\th^*;\Gamma^*,C^*),~\th^*-\hat{\th}},\label{eq:key_ineq_gradg_vs_theta}
\end{align}
\vspace*{-15mm}

\noindent where the equality follows from the fact that $\grad_\th f(\th^*;\Gamma^*,C^*)=\grad_\th f(\hat{\th};\hat{\Gamma},\hat{C})=\mathbf{0}$. Next, from \eqref{eq:grad_f_explicit} it follows that \vspace*{-5mm}
\begin{eqnarray*}
\lefteqn{\Delta_k:=\left|\frac{\partial}{\partial\theta_k}f(\th^*;\hat{\Gamma},\hat{C})-\frac{\partial}{\partial\theta_k} f(\th^*;\Gamma^*,C^*)\right|}\\
& &\leq \left|(\norm{\hat{\Gamma}}_F^2-\norm{\Gamma^*}_F^2)\fprod{R'_k(\th^*),R(\th^*)}+\fprod{C^*,R'_k(\th^*)\otimes\Gamma^*}
-\langle\hat{C},R'_k(\th^*)\otimes\hat{\Gamma}\rangle\right|\\
& &\leq\left(\norm{\hat{\Gamma}+\Gamma^*}_*\norm{R(\th^*)}_*+\norm{\hat{C}}_*\right)\norm{R'_k(\th^*)}_2\norm{\hat{\Gamma}-\Gamma^*}_2+n\norm{\Gamma^*}_*\norm{R'_k(\th^*)}_2\norm{\hat{C}-C^*}_2,
\end{eqnarray*}
\vspace*{-15mm}

\noindent where the second inequality uses the following basic inequalities and identities: Given $X,Y,V,W\in\reals^{m\times n}$ \textbf{i}) $\fprod{X,Y}\leq\norm{X}_2\norm{Y}_*$, \textbf{ii}) $\norm{X}_F^2-\norm{Y}_F^2=\fprod{X+Y,X-Y}$, \textbf{iii}) $\fprod{X,Y}-\fprod{V,W}=\fprod{X,Y-W}+\fprod{W,X-V}$; given $X\in\mS^p$, $Y\in\mS^n$ \textbf{iv}) $\norm{X\otimes Y}_2=\norm{X}_2\norm{Y}_2$, \textbf{v}) $\norm{X\otimes Y}_*\leq\min\{p\norm{X}_2\norm{Y}_*,~n\norm{X}_*\norm{Y}_2\}$. Note that since $R(\th^*)\in\mS^n_{++}$, $\norm{R(\th^*)}_*=\Tr(R(\th^*))=n$. Moreover, \eqref{eq:ball_ineq_1} implies that $\norm{\hat{\Gamma}}_*\leq \norm{\Gamma^*}_*+\tfrac{p}{2}\delta_2$, and $\norm{\hat{C}}_*\leq \norm{C^*}_*+\tfrac{np}{2}\delta_2$. Hence, \vspace*{-5mm}
\begin{align*}
\Delta_k\leq \left(3n\norm{\Gamma^*}_*+\norm{C^*}_*+\frac{(np+1)}{2}\delta_2\right)\norm{R_k'(\th^*)}_2\norm{(\hat{\Gamma},\hat{C})-(\Gamma^*,C^*)}_a.
\end{align*}
\vspace*{-15mm}

\noindent Therefore, for $\kappa:=\left(3n\norm{\Gamma^*}_*+\norm{C^*}_*+\frac{(np+1)}{2}\delta_2\right)\left(\sum_{k=1}^q\norm{R_k'(\th^*)}_2^2\right)^{\tfrac{1}{2}}$
\vspace*{-5mm}
\begin{equation*}
\norm{\grad_\th f(\th^*;\hat{\Gamma},\hat{C})-\grad_\th f(\th^*;\Gamma^*,C^*)}_2\leq\kappa~\norm{(\hat{\Gamma},\hat{C})-(\Gamma^*,C^*)}_a
\vspace*{-3mm}
\end{equation*}
Applying Cauchy Schwarz inequality to \eqref{eq:key_ineq_gradg_vs_theta}, we have \vspace*{-5mm}
\begin{equation}
\norm{\hat{\th}-\th^*}\leq 2\frac{\kappa}{\gamma^*}~\norm{(\hat{\Gamma},\hat{C})-(\Gamma^*,C^*)}_a.
\vspace*{-3mm}
\end{equation}
Thus, choosing $N(\epsilon)\geq N_0:=\left\lceil2\big[(M+2)\ln(np)+\ln 4\big]\right\rceil$ such that \vspace*{-5mm}
\begin{equation*}
\sqrt{\frac{N(\epsilon)}{N_0}}\geq 160\max\limits_{i=1,...,p}(\Gamma_{ii}^*)\frac{\kappa}{\gamma^*} \left(\frac{b^*}{a^*}\right)^2p(n+\norm{G}_F)~\frac{1}{\epsilon},
\vspace*{-3mm}
\end{equation*}
i.e., $N(\epsilon)=\cO(\frac{1}{\epsilon^2})$, implies that $\norm{\hat{\th}-\th^*}\leq\epsilon$, and $\norm{\hat{\Gamma}-\Gamma^*}_2\leq\frac{\gamma^*}{2\kappa}\epsilon$ with probability at least $1-(np)^{-M}$.
\par

\end{document}